\newtheorem{Theorem}{Theorem}
\newtheorem{Lemma}[Theorem]{Lemma}
\newtheorem{Proposition}[Theorem]{Proposition}
\newtheorem{Corollary}[Theorem]{Corollary}
\newtheorem{Assumption}{Assumption}
\newtheorem{Definition}{Definition}
\newcommand{\EE}{\mathrm{E}}
\newcommand{\Real}{\mathbb{R}}
\newcommand{\calB}{\mathcal{B}}
\newcommand{\GRE}{\mathrm{GRE}}
\newcommand{\betastar}{\beta^*}
\newcommand{\betahat}{\hat{\beta}}
\newcommand{\betatilde}{\tilde{\beta}}
\newcommand{\Llambda}{L_{\lambda_n}}
\DeclareMathOperator{\supp}{supp}
\DeclareMathOperator{\sgn}{sgn}
\newcommand{\betacheck}{\check{\beta}}
\newcommand{\betabar}{\bar{\beta}}
\begin{document}

%

%

\twocolumn[

\aistatstitle{Independently Interpretable Lasso: A New Regularizer
for Sparse Regression with Uncorrelated Variables}

\aistatsauthor{
	Masaaki Takada${}^{1, 2}$
	\And
	Taiji Suzuki${}^{3, 4, 5}$
	\And
	Hironori Fujisawa${}^{1, 6}$
}
\aistatsaddress{
	\texttt{mtakada@ism.ac.jp}
	\And
	\texttt{taiji@mist.i.u-tokyo.ac.jp}
	\And
	\texttt{fujisawa@ism.ac.jp}
}
\aistatsaddress{
	${}^{1}$School of Advanced Sciences, The Graduate University for Advanced Studies\\
	${}^{2}$Toshiba Corporation\\
	${}^{3}$Graduate School of Information Science and Technology, The University of Tokyo\\
	${}^{4}$PRESTO, Japan Science and Technology Agency, Japan\\
	${}^{5}$Center for Advanced Integrated Intelligence Research, RIKEN, Tokyo, Japan\\
	${}^{6}$The Institute of Statistical Mathematics
}

]

\begin{abstract}
Sparse regularization such as $\ell_1$ regularization is
a quite powerful and widely used strategy for high dimensional learning problems.
The effectiveness of sparse regularization has been supported
practically and theoretically by several studies.
However, one of the biggest issues in sparse regularization is that
its performance is quite sensitive to correlations between features.
Ordinary $\ell_1$ regularization can select variables correlated with each other,
which results in deterioration of not only its generalization error but also interpretability.
In this paper, we propose a new regularization method, ``Independently Interpretable Lasso'' (IILasso).
Our proposed regularizer suppresses selecting correlated variables, and thus each active variable independently affects the objective variable in the model.
Hence, we can interpret regression coefficients intuitively
and also improve the performance by avoiding overfitting.
We analyze theoretical property of IILasso and show that the proposed method is much advantageous for its sign recovery and achieves almost minimax optimal convergence rate.
Synthetic and real data analyses also indicate the effectiveness of IILasso.
\end{abstract}

\section{Introduction}
\label{sec:Introduction}
High dimensional data appears in many fields such as biology, economy and industry.
A common approach for high dimensional regression is
sparse regularization strategy such as
Lasso (Least absolute shrinkage and selection operator) \cite{Tibshirani:1996}.
Since the sparse regularization performs both parameter estimation and feature selection simultaneously,
(i) it offers {\it interpretable} results by identifying informative variables,
and (ii) it can effectively {\it avoid overfitting} by discarding redundant variables.
Because of these properties, sparse regularization has shown huge success in wide range of data analysis in science and engineering.
Moreover, several theoretical studies have been developed
to support the effectiveness of sparse regularization,
and efficient optimization methods also have been proposed
so that sparse learning is easily executed.

However, the performance of sparse regularization is guaranteed
only under ``small correlation'' assumptions, that is,
the features are not much correlated with each other.
Actually, typical theoretical supports are based on a kind of
small correlation assumptions such as {\it restricted eigenvalue condition}
\cite{bickel2009simultaneous,Bhlmann:2011:SHD:2031491}.
In the situation where the features are highly correlated,
the selected variables are likely to be jointly correlated.
As a result, each coefficient cannot be seen as independent variable contribution
so that the model is no longer easy to interpret.
This kind of interpretability is called ``decomposability'' in \cite{Lipton:2016},
which represents the ability whether we can decompose a model into some parts and interpret each component.
In our experience, this kind of interpretability is quite important in many practical modeling for decision-making.
In addition to this, selecting several correlated variables results in worse generalization error
because the redundant representation tends to give overfitting.
Thus, it is favorable 
to construct the model with uncorrelated variables both for
interpretability and generalization ability.


Several methods have been proposed to resolve the problem induced by correlations among variables.
The first line of research is based on a strategy in which correlated variables are either all selected or not selected at all.
Examples of this line are Elastic Net \cite{Zou:2005}, Pairwise Elastic Net \cite{Lobert:2010} and Trace Lasso \cite{Grave:2011}.
These methods select not only important variables but also variables strongly correlated. 
Although these methods  often give better generalization error,
this strategy makes it hard to interpret the model, because many correlated variables are incorporated into the final model.

The second line of research including our proposed method is based on another strategy in which we select uncorrelated variables, and thus obtain decomposability.
Uncorrelated Lasso \cite{Chen:2013} intends to construct a model with uncorrelated variables.
However, it still tends to select ``negatively'' correlated variables
and hence the correlation problem is not resolved.
Exclusive Group Lasso \cite{Kong:2014} is also in this line, but it is necessary to group correlated variables beforehand.
They suggest that we group variables whose correlations are greater than a certain threshold.
However, determination of the threshold is not a trivial problem and
practically it causes unstable results.

In this paper, we propose a new regularization method, named ``Independently Interpretable Lasso'' (IILasso),
which offers efficient variable selection,
does not select negatively correlated variables, and
is free from a specific pre-processing such as grouping.
Our proposed regularization formulation
more aggressively induces the sparsity of the active variables and
reduces the correlations among them.
Hence, we can independently interpret the effects of active variables in the output model, and the generalization performance is also much improved.
To support the usefulness of our proposal,
we give the following contributions:
\begin{itemize}
	\item We show the necessary and sufficient condition for the sign consistency of variables selection.
	As a result, it is shown that our method achieves the sign consistency
	under a milder condition than Lasso.
	\item The convergence rate of the estimation error is analyzed.
	We show that the estimation error achieves the almost minimax optimal rate
	and gives better performance than Lasso in some situations.
	\item We propose a coordinate descent algorithm to find a local optimum of the objective function. This is guaranteed to converge to a stationary point.
	\item It will be shown that
	every local optimal solution achieves the same statistical error rate
	as the global optimal solution and thus is almost minimax optimal
	though the objective function is not necessarily convex
	for the sake of better statistical properties.
\end{itemize}

The rest of this paper is as follows:
In Section 2, we propose a new regularization formulation and introduce its optimization method.
We also state the relationship with existing works.
In Section 3, we show theoretical results on the sign recovery and the convergence rate.
We can see that IILasso is superior to Lasso for correlated design.
In Section 4, both synthetic and real-world data experiments, including 10 microarray datasets, are illustrated.
In Section 5, we summarize the properties of IILasso.

{\bf Notations}.
Let $M \in \mathbb{R}^{n \times p}$.
We use subscripts for the columns of $M$, i.e., $M_j$ denotes the $j$-th column.
Let $v \in \mathbb{R}^{p}$.
${\rm Diag}(v) \in \mathbb{R}^{p \times p}$ is the diagonal matrix whose $j$-th diagonal element is $v_j$.
$|v|$ is the element-wise absolute vector whose $j$-th element is $|v_j|$.
${\rm sgn}(v)$ is the sign vector whose elements are $1 \ {\rm for} \ v_j>0$, $-1 \ {\rm for} \ v_j < 0$, ${\rm and} \ 0 \ {\rm for} \ v_j=0$.
$\supp(v)$ is the support set of $v$, i.e., $\{ j \in \{ 1, \cdots, p \}| v_j \neq 0 \}$.
Let $S$ be a subset of $\{ 1, \cdots, p \}$.
$|S|$ is the number of the elements in $S$.
$S^c$ is the complement subset of $S$, i.e., $S^c = \{ 1, \cdots, p \} \backslash S $.
$v_S$ is the vector $v$ restricted to the index set $S$.
$M_{S_1 S_2}$ is the matrix whose row indexes are restricted to $S_1$ and column indexes are restricted to $S_2$.

\section{Proposed Method}

\subsection{IILasso: A New Regularization Formulation}


\begin{figure*}[t]
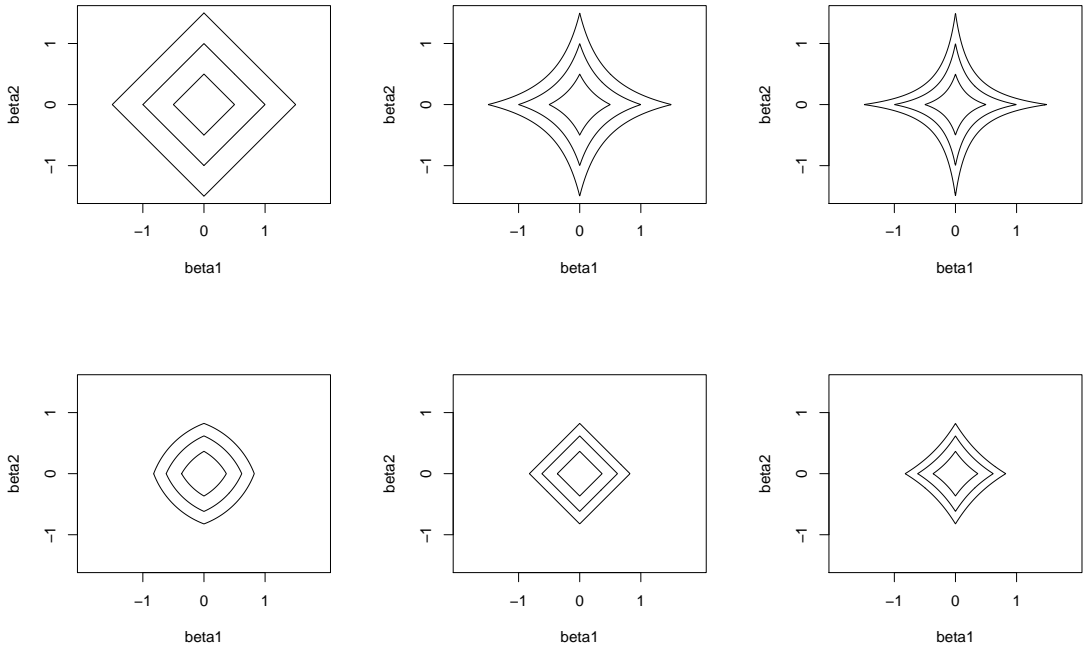

	\begin{center}
		\includegraphics[scale=0.48]{a1R0000.eps}
		\includegraphics[scale=0.48]{a1R0110.eps}
		\includegraphics[scale=0.48]{a1R0220.eps}
		\includegraphics[scale=0.48]{a1R1001.eps}
		\includegraphics[scale=0.48]{a1R1111.eps}
		\includegraphics[scale=0.48]{a1R1221.eps}
		\caption{Contours of penalties such that $\|\beta\|_1 + \frac{1}{2} |\beta|^T R |\beta| = 0.5,~ 1,~ 1.5$ with $\beta = [\beta_1, \beta_2]$, $
			R = [0, 0; 0, 0], ~
			[0, 1; 1, 0], ~
			[0, 2; 2, 0]
		$ in the top panel, and $
			R = [1, 0; 0, 1], ~
			[1, 1; 1, 1], ~
			[1, 2; 2, 1]
		$ in the bottom panel from left to right.}
		\label{fig:Contours}
		\end{center}
\end{figure*}

Consider the problem of predicting $y \in \mathbb{R}^n$, given a design matrix $X \in \mathbb{R}^{n\times p}$, assuming a linear model
\begin{align*}
y = X \beta + \epsilon,
\end{align*}
where $\epsilon \in \mathbb{R}^n$ is a noise and $\beta \in \mathbb{R}^p$ is a regression coefficient.
We assume without loss of generality that the features are standardized such that $\Sigma_{i=1}^n X_{ij} = 0, \ \Sigma_{i=1}^n X_{ij}^2 / n = 1$ and $\Sigma_{i=1}^n y_{i} = 0$.
Then, Lasso solves the following optimization problem:
\begin{align*}
\min_\beta \frac{1}{2n}\|y - X \beta\|_2^2 + \lambda ||\beta\|_1,
\end{align*}
where $\lambda > 0$ is a regularization parameter.
Since the last penalty term induces the sparsity of estimated values,
the output model contains few variables and hence is tractable.
This is critically important for interpretability.
However, as we have described in Section \ref{sec:Introduction},
when there are highly correlated variables,
Lasso can select correlated variables especially for a small $\lambda$
resulting in worse interpretability and poor generalization error.

To overcome this problem, we propose a new regularization formulation as follows:
\begin{align}
\min_\beta \frac{1}{2n}\|y - X \beta\|_2^2 + \lambda \left( \|\beta\|_1 + \frac{\alpha}{2} |\beta|^\top R |\beta| \right), \label{eq:objective}
\end{align}
where $\alpha > 0$ is a regularization parameter for the new regularization term, and $R \in \mathbb{R}^{p \times p}$ is a symmetric matrix whose component $R_{jk} \geq 0$ represents the similarity between $X_j$ and $X_k$.
The last term of (\ref{eq:objective}) is also written as $\frac{\lambda \alpha}{2} \sum_{j=1}^p \sum_{k=1}^p R_{jk}|\beta_j||\beta_k|$.
We define $R_{jk}$ for $j \neq k$ as a monotonically increasing function of the absolute correlation $r_{jk}=\frac{1}{n}|X_j^\top X_k|$,
so that correlated variables are hard to be selected simultaneously.
In particular, when $X_j$ and $X_k$ are strongly correlated, the squared error does not change under the condition that $\beta_j + \beta_k$ is constant, but the penalty $R_{jk} |\beta_j| |\beta_k|$ strongly induces either $\beta_j=0$ or $\beta_k=0$.
On the contrary, if $X_j$ and $X_k$ are uncorrelated, i.e., $R_{jk}$ is small, then the penalty of selecting both $\beta_j$ and $\beta_k$ is negligible.

We can see the exclusive effect of our regularization term by the constraint regions corresponding to the penalties.
Figure \ref{fig:Contours} illustrates the constraint regions of $\|\beta\|_1 + \frac{1}{2} |\beta|^T R |\beta|$ for the case $p=2$.
As diagonal elements of $R$ increases (from the top to the bottom panel), the contours become smooth at the axis of coordinates.
Because of this, the minimizer tends to select both variables if two variables are strongly correlated.
This is the grouping effect of Elastic Net as we describe later.
On the other hand, as off-diagonal elements of $R$ increases (from the left to the right panel), the contours become pointed at the axis of coordinates and the minimizer tends to be sparser.
This is the exclusive effect for correlated variables.
Although the shape of contours resemble that of $\ell_q \ (0<q<1)$ penalty, we would emphasize that we use correlation information $R_{jk}$ and multiplication term $|\beta_j| |\beta_k|$.
Since our regularization term is adaptive for correlations, our penalty achieves both sparse and stable solutions.

Some definition variations of the similarity matrix $R$ can be considered.
One of the natural choices is $R_{jk} = r_{jk}^2$.
$R$ is positive semidefinite in this case because the Hadamard product of positive semidefinite matrices is also positive semidefinite.
Hence, the problem (\ref{eq:objective}) turns to be convex and easy to solve the global optimum.
However, it may not reduce correlations enough.
Another choice is $R_{jk} = |r_{jk}|$, which reduces correlations more strongly.
Another effective choice is $R_{jk} = |r_{jk}| / (1 - |r_{jk}|)$ for $j \neq k$ and $R_{jk} = 0$ for $j=k$.
In this case, if a correlation between two certain variables becomes higher, i.e., $r_{jk} \rightarrow 1$, then the penalty term diverges infinitely and IILasso cannot simultaneously select both of them.
We use the last one in our numerical experiments, because it is favorable from theoretical studies, as described in Section~\ref{sec:theory}.

Now, let us emphasize the advantage of uncorrelated models for interpretability
with a simple example (Figure \ref{fig:example}).
Suppose $X=[X_1,X_2,X_3] \in \mathbb{R}^{n \times 3}$ is standardized,
$X_1$ and $X_2$ are orthogonal, and $X_3=(X_1+X_2)/\sqrt{2}$.
Consider two models: (A) $y=2X_1+X_2$ and (B) $y=X_1+\sqrt{2}X_3$.
Both models output the same prediction.
Which model do you think is more interpretable?
We believe that the model (A) is more interpretable than (B),
because active variables are uncorrelated so that we can see each coefficient as independent variable contribution.
Although Lasso selects (B) because $\ell_1$ norm of its coefficients is small,
IILasso for $\alpha$ large enough selects (A) because our reguralization term excludes correlations.

\begin{figure}[t]
	\begin{center}
		\includegraphics[scale=0.55]{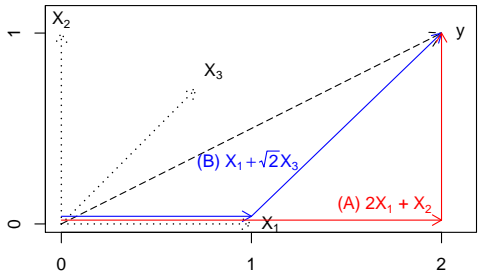}
		\caption{Uncorrelated model vs correlated model}
		\label{fig:example}
		\end{center}
\end{figure}

\subsection{Optimization}
To solve (\ref{eq:objective}), we introduce Coordinate Descent Algorithm (CDA),
which was originally proposed for Lasso ($\alpha = 0$ for IILasso) \cite{Friedman:2007, Friedman:2010}.
CDA is a simple and efficient algorithm, particularly in high dimensions.
CDA basically follows simply: For each $j \in \{1, \cdots, p\}$, we optimize the objective function with respect to $\beta_j$ with the remaining elements of $\beta$ fixed at their most recently updated values.

CDA is applicable even when the quadratic penalty is included.
Let $L(\beta)$ denote the objective function in (\ref{eq:objective}).
To derive the update equation, when $\beta_j \neq 0$, differentiating $L(\beta)$ with respect to $\beta_j$ yields
\begin{align*}
\partial_{\beta_j} L(\beta) =&
- \frac{1}{n} X_j^\top \left( y - X_{-j} \beta_{-j} \right)
+ \left(1 + \lambda \alpha R_{jj}\right) \beta_j \\
&+ \lambda \left( 1 + \alpha R_{j, -j} |\beta_{-j}| \right) {\rm sgn}(\beta_j),
\end{align*}
where $\beta_{-j}$ denotes $\beta$ without the $j$-th component, $X_{-j}$ denotes $X$ without $j$-th column and $R_{j, -j}$ denotes the $j$-th row vector without $j$-th column of $R$.
Solving $\partial_{\beta_j} L(\beta) = 0$, we obtain the update rule as
\begin{align}
\beta_j \leftarrow \frac{1}{1+\lambda \alpha R_{jj}}
S\biggl( & \frac{1}{n} X_j^\top \left( y - X_{-j} \beta_{-j} \right), \notag \\
& \lambda \left( 1 + \alpha R_{j, -j} |\beta_{-j}| \right) \biggr), \label{eq:cdaupdate}
\end{align}
where $S(z, \gamma)$ is a soft thresholding function
\begin{align*}
S(z, \gamma) :&= {\rm sgn}(z)(|z|-\gamma)_+ \\
&= \begin{cases}
z - \gamma & {\rm if} \ z>0 \ {\rm and} \ \gamma < |z|,\\
z + \gamma & {\rm if} \ z<0 \ {\rm and} \ \gamma < |z|,\\
0 & {\rm if} \ |z| \leq \gamma.
\end{cases}
\end{align*}

The whole algorithm for solving IILasso is described in Algorithm \ref{alg:iilasso}.
We search several $\lambda$ from $\lambda_{\max}$ to $\lambda_{\min}$.
$\beta$ is initialized at each $\lambda$ in some way such as a) zeros for all elements, b) the solution of previous $\lambda$, or c) the solution of ordinary Lasso.

\begin{algorithm}[t]
	\caption{CDA for IILasso}
	\label{alg:iilasso}
	\begin{algorithmic}
		\FOR{$\lambda = \lambda_{\max}, \cdots, \lambda_{\min}$}
		\STATE initialize $\beta$
		\WHILE{until convergence}
		\FOR{$j = 1, \cdots, p$}
		\STATE $\beta_j \leftarrow \frac{1}{1+\lambda \alpha R_{jj}}
		S\bigl(\frac{1}{n} X_j^\top \left( y - X_{-j} \beta_{-j} \right), $
		\\
		$~~~~~~~~~~~~~~~~~~~~~~ \lambda \left( 1 + \alpha R_{j, -j} |\beta_{-j}| \right) \bigr)$
		\ENDFOR
		\ENDWHILE
		\ENDFOR
	\end{algorithmic}
\end{algorithm}

In Algorithm \ref{alg:iilasso}, we can see that the objective function monotonically decreases at each update and the estimate converges a stationary point.

\begin{Proposition}
	Let $\{\beta^t\}_{t=0,1,\cdots}$ be a sequence of $\beta$ in Algorithm \ref{alg:iilasso}.
	Then, every cluster point of $\{ \beta^t \}_{t \equiv (p-1) {\rm mod} p}$ is a stationary point.
\end{Proposition}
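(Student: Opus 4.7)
The plan is to verify three things: (i) each one-dimensional subproblem is strictly convex with a unique minimizer, so the update map $T_j$ is continuous; (ii) at a cluster point $\beta^*$ of the end-of-cycle subsequence, one full cycle of updates leaves $\beta^*$ unchanged, hence $\beta^*$ is coordinatewise optimal; (iii) coordinatewise optimality is equivalent to $0\in\partial L(\beta^*)$ in the Clarke sense.

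For (i), with $\beta_{-j}$ held fixed, $L$ reduces to $\tfrac{1}{2}(1+\lambda\alpha R_{jj})\beta_j^2 - b_j\beta_j + \lambda(1+\alpha R_{j,-j}|\beta_{-j}|)|\beta_j| + \text{const}$. The quadratic coefficient is strictly positive and the coefficient of $|\beta_j|$ is nonnegative (since $R_{jk}\ge 0$), so the soft-thresholding formula (\ref{eq:cdaupdate}) gives the unique minimizer, and $T_j$ depends continuously on $\beta_{-j}$.

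For (ii), $L$ is continuous, bounded below by $0$, and coercive via $L(\beta)\ge\lambda\|\beta\|_1$, so $\{\beta^t\}$ lies in a compact sublevel set and $L(\beta^t)$ decreases monotonically to some limit $L^*$. Let $\beta^{t_k}\to\beta^*$ along the end-of-cycle subsequence. Continuity of $T_1,\dots,T_p$ implies the next $p$ iterates converge to $T_1(\beta^*), (T_2\!\circ\!T_1)(\beta^*),\dots,(T_p\!\circ\!\cdots\!\circ\!T_1)(\beta^*)$, and each of these intermediate points must have objective value $L^*$ by continuity of $L$ combined with the monotone convergence. Because every one-dimensional subproblem has a \emph{unique} minimum, equality of the objective before and after a coordinate update forces that coordinate to be fixed; iterating through $j=1,\dots,p$ shows $\beta^*$ is coordinatewise optimal.

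For (iii), the Clarke subdifferential of $L$ decomposes coordinatewise thanks to the chain rule applied to $Q(\beta)=h(|\beta|)$ with $h(u)=u^\top R u$ smooth and the componentwise absolute value regular: the $j$-th component of $\partial L(\beta^*)$ equals $-\tfrac{1}{n}X_j^\top(y-X\beta^*) + \lambda\alpha R_{jj}\beta^*_j + \lambda(1+\alpha R_{j,-j}|\beta^*_{-j}|)\,\partial|\beta^*_j|$, which is exactly the subdifferential of the $j$-th one-dimensional subproblem at $\beta^*_j$. That $0$ lies in it is the first-order condition from (ii), so $0\in\partial L(\beta^*)$, i.e.\ $\beta^*$ is a stationary point.

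The main obstacle is step (ii): ruling out any residual descent within a cycle at the cluster point. The remedy is to combine continuity of $L$ and of each $T_j$ with the strict uniqueness of the one-dimensional minimizers to propagate the fixed-point property through the entire cycle. Nonconvexity and nonseparability of the penalty $|\beta|^\top R|\beta|$ do not interfere, because each one-dimensional slice is convex (with nonnegative coefficients on $\beta_j^2$ and $|\beta_j|$) even though the joint objective is not.
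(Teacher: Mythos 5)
Your proof is correct, and it reaches the same conclusion by the same underlying mechanism as the paper, but the route is different in presentation: the paper simply verifies the hypotheses of Theorem 4.1(c) of Tseng (2001) --- compactness of the level set, continuity of $L$, uniqueness of each coordinatewise minimizer, and regularity of $L$ at the cluster point --- and cites that theorem, whereas you inline a self-contained proof of exactly that special case. Your steps (i) and (ii) reproduce the part of Tseng's argument showing that end-of-cycle cluster points are coordinatewise minima (continuity of the soft-thresholding maps $T_j$, monotone descent to a common limit $L^*$, and uniqueness of the one-dimensional minimizers to force $T_j$ to fix the cluster point), and your step (iii) is the substitute for Tseng's regularity hypothesis: it makes explicit why coordinatewise optimality implies stationarity even though the penalty $|\beta|^\top R |\beta|$ is nonseparable and nonconvex, namely that every summand of $L$ is Clarke regular and the cross terms $R_{jk}|\beta_j||\beta_k|$ have subdifferentials that factor through $\partial|\beta_j|$, so one can assemble a single subgradient of $L$ whose $j$-th component is the vanishing subgradient of the $j$-th one-dimensional subproblem. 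This is precisely the delicate point that the paper dispatches with the terse remark that $L$ is ``locally quadratic in any directions'' and hence regular, so your version buys transparency at that step (one should only phrase the conclusion as exhibiting an element of $\partial L(\beta^*)$ containing $0$, rather than saying the subdifferential literally ``decomposes coordinatewise,'' since it is not a Cartesian product); the paper's version buys brevity and generality by outsourcing the convergence machinery to a standard reference.
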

\begin{proof}
The proof is based on Theorem 4.1 in \cite{Tseng:2001}.
First, we can see that the level set $\{ \beta | L(\beta) \leq L(\beta^0) \}$ is compact and $L(\beta)$ is continuous.
Moreover, $L(\beta)$ has a unique minimum with (\ref{eq:cdaupdate}) in terms of $\beta_j$.
Therefore, every cluster point of $\{ \beta^t \}_{t \equiv (p-1) {\rm mod} p}$ is a coordinatewise minimum point.
In addition, since $L(\beta)$ can be seen as a locally quadratic function in any directions,
$L(\beta)$ is {\it regular} at the cluster point.
These and Theorem 4.1 (c) in \cite{Tseng:2001} concludes the assertion.
\end{proof}



\subsection{Related Work}
There are some existing works that take correlations among variables into account.
We can divide them into mainly two directions: 1) grouping selection and 2) exclusive selection.
The former groups correlated variables and selects correlated variables together.
The latter excludes correlated variables and selects uncorrelated variables.

The representative method of grouping selection is Elastic Net \cite{Zou:2005}.
The objective function is constructed by squared $\ell_2$ penalty in addition to $\ell_1$ penalty:
\begin{align*}
\min_\beta ||y - X \beta\|_2^2 + \lambda_1 \|\beta\|_1 + \lambda_2 \|\beta\|_2^2.
\end{align*}
Due to $\ell_2$ penalty, if the variables $X_j$ and $X_k$ are strongly correlated tending to be 1, then estimated values of $\beta_j$ and $\beta_k$ get closer.
If $X_j$ and $X_k$ are equal, especially, then $\beta_j$ and $\beta_k$ must be equal.
This behavior is called grouping effect.
Pairwise Elastic Net \cite{Lobert:2010} and Trace Lasso \cite{Grave:2011} are the same direction of research (and improve the prediction accuracy).
These methods tend to include many correlated variables and each coefficient no longer indicates independent variable contribution.
As a result, it is hard to interpret which variables are truly active and how variables affect the objective variable.

Another direction of research is exclusive selection.
Uncorrelated Lasso (ULasso) \cite{Chen:2013} aims to reduce correlations among active variables.
It optimizes the following objective function:
\begin{align}
\min_\beta \|y - X \beta\|_2^2 + \lambda_1 \|\beta\|_1 + \lambda_2 \beta^\top R \beta, \label{eq:uncorrelated}
\end{align}
where $R \in \mathbb{R}^{p\times p}$ with each element $R_{jk} = (\frac{1}{n}X_j^\top X_k)^2$.
ULasso quite resembles our formulation, but there exists a critical difference that they use $\beta$ instead of $|\beta|$ in the objective function (\ref{eq:uncorrelated}).
We found that ULasso does not necessarily select uncorrelated variables.
For example, consider the case $X=[X_1, X_2]$.
The last term of (\ref{eq:uncorrelated}) is $\lambda_2 (\beta_1^2 + \beta_2^2 + 2R_{12} \beta_1\beta_2)$.
If $R_{12} \neq 0$, then the term $R_{12}\beta_1\beta_2$ encourages $|\beta_1\beta_2|$ larger with $\beta_1\beta_2<0$.
This implies that ULasso tends to select correlated variables and set coefficients to the opposite sign.
In particular, $X_1$ and $X_2$ are strongly correlated, then it reduces $\lambda_2(\beta_1+\beta_2)^2$, which induces $\beta_1 = - \beta_2$.
It is not a major problem when $X_1$ and $X_2$ are positively correlated, but is a significant problem when $X_1$ and $X_2$ are negatively correlated.
This problem is overcome in our method, as described in Section 2.1.
Therefore, the difference between their ULasso and our IILasso is essential and crucial.

Excusive Group Lasso (EGLasso) \cite{Kong:2014} is also the same direction of exclusive selection.
It optimizes the following objective function:
\begin{align}
\min_\beta \|y - X \beta\|_2^2  + \lambda_1 \|\beta\|_1 + \lambda_2 \sum_{k=1}^K \|\beta^{(k)}\|_1^2, \label{eq:eglasso}
\end{align}
where $\beta^{(k)}$ consists of the variables of $\beta$ within a group of predictors $g_k \subset \{1, \cdots, p\}$ and $K$ is the number of groups.
The last term is $\ell$1/$\ell$2 penalty, which acts on exclusive feature selection.
For example, when $p=3, g_1=\{1,2\}$ and $g_2=\{3\}$, then the last term becomes $\lambda_2 ((|\beta_1|+|\beta_2|)^2 + |\beta_3|^2)$.
This enforces sparsity over each intra-group.
They suggest that we put highly correlated variables into the same group in order to select uncorrelated variables.
They use $|r_{ij}|>\theta$ with $\theta \approx 0.90$ as a threshold.
EGLasso can be seen as a special case of IILasso.
Let $R$ be a group indicator matrix such as $R_{jk}=1$ if $X_j$ and $X_k$ belong to the same group and $R_{jk}=0$ otherwise.
Then IILasso is reduced to EGLasso.
For the above example, if we define similarity matrix $R=[1, 1, 0; 1, 1, 0; 0, 0, 1]$,
then the last term of IILasso objective function (\ref{eq:objective}) becomes $\lambda \left( \beta_1^2 + 2|\beta_1||\beta_2| + \beta_2^2 + \beta_3^2 \right)$, which is the same as the last term of (\ref{eq:eglasso}).
As we see, EGLasso needs to determine the threshold $\theta$ and group variables beforehand.
This can cause severely unstable estimation.

\section{Theoretical Results}
\label{sec:theory}
In this section, we show theoretical properties of IILasso.
We first show the sign recovery condition of IILasso.
Then, we derive the convergence rate of IILasso.
These results are significantly important for interpretability and generalization ability.
In addition, we show the property of local minimum, which implies that every local optimal solution achieves the same statistical error rate as the global optimal solution.
In this section, let $\betastar$ denote the true parameter, $S$ denote the true active sets, i.e., $S = \{ j \in \{ 1, \cdots, p \} | \beta_j^* \neq 0 \}$, and $s = |S|$.

\subsection{Sign Recovery}

We give the necessary and sufficient condition of sign recovery.
\begin{Theorem} \label{theorem:sign}
	Define
	\begin{align*}
	U :=& \frac{1}{n} X_S^\top X_S + \lambda \alpha {\rm Diag}({\rm sgn}(\beta_S^*)) R_{SS} {\rm Diag}({\rm sgn}(\beta_S^*)), \\
	V :=& \lambda {\rm sgn}(\beta_S^*) + \lambda \alpha{\rm Diag}({\rm sgn}(\beta_S^*)) R_{SS} {\rm Diag}({\rm sgn}(\beta_S^*)) \beta_S^*\\
	&- \frac{1}{n} X_S^\top \epsilon.
	\end{align*}
	Assume $U$ is invertible.
	Then, there exists a critical point $\hat{\beta}$ of (\ref{eq:objective}) with correct sign recovery ${\rm sgn}(\hat{\beta}) = {\rm sgn}(\beta^*)$ if and only if the following two conditions hold:
	\begin{align}
	&{\rm sgn} \left( \beta_S^* - U^{-1} V \right) = {\rm sgn}(\beta_S^*),\label{eq:kkts} \\
	&\left| \frac{1}{n}X_{S^c}^\top X_S U^{-1} V
	+ \frac{1}{n} X_{S^c}^\top \epsilon \right| \notag \\
	&\leq
	\lambda \left(1 + \alpha R_{S^cS} \left| \beta_S^* - U^{-1} V \right| \right), \label{eq:kktsc}
	\end{align}
	where both of these vector inequalities are taken elementwise.
\end{Theorem}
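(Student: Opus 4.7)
The plan is to translate the conclusion into the coordinatewise first-order (KKT) conditions that characterize critical points of $L(\beta) := \frac{1}{2n}\|y-X\beta\|_2^2+\lambda\|\beta\|_1+\frac{\lambda\alpha}{2}|\beta|^\top R|\beta|$. Because every $R_{jk}\geq 0$, the restriction of $L$ to a single coordinate $\beta_j$ (with the others fixed) has the form $a\beta_j^2+b|\beta_j|+c\beta_j+$ const with $a,b\geq 0$, hence is convex, so standard convex subdifferential calculus characterizes a critical point $\hat\beta$ by $0\in\partial_{\beta_j}L(\hat\beta)$ for every $j$. Writing $D=\diag(\sgn(\beta_S^*))$ and using the target sign pattern $\sgn(\hat\beta)=\sgn(\beta^*)$ (so $\hat\beta_{S^c}=0$ and $|\hat\beta_S|=D\hat\beta_S$), I would split the analysis into the blocks $S$ and $S^c$.

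First I would process $j\in S$. Since $\hat\beta_j$ has the same sign as $\beta_j^*$, the penalty is locally smooth in $\beta_j$ and the stationarity relations become the vector equality
\begin{align*}
-\tfrac{1}{n}X_S^\top(y-X_S\hat\beta_S)+\lambda\sgn(\beta_S^*)+\lambda\alpha\,DR_{SS}D\hat\beta_S=0,
\end{align*}
where the identity $\sgn(\hat\beta_j)\,R_{jk}|\hat\beta_k|=(DR_{SS}D\hat\beta_S)_j$ is the only bookkeeping needed. Plugging in $y=X_S\beta_S^*+\epsilon$ yields the linear system $U\hat\beta_S=U\beta_S^*-V$, and since $U$ is invertible we obtain $\hat\beta_S=\beta_S^*-U^{-1}V$. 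The requirement $\sgn(\hat\beta_S)=\sgn(\beta_S^*)$ is then precisely condition (\ref{eq:kkts}).

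Next I would treat $j\in S^c$, where $\hat\beta_j=0$. Viewed as a function of $\beta_j$ only, the penalty is $\lambda|\beta_j|(1+\alpha R_{j,-j}|\hat\beta_{-j}|)+\frac{\lambda\alpha}{2}R_{jj}\beta_j^2$; since the coefficient of $|\beta_j|$ is nonnegative, its subdifferential at $0$ is the symmetric interval $\lambda[-1,1](1+\alpha R_{jS}|\hat\beta_S|)$. The inclusion $0\in\partial_{\beta_j}L(\hat\beta)$ therefore reads
\begin{align*}
\bigl|\tfrac{1}{n}X_j^\top(y-X_S\hat\beta_S)\bigr|\leq\lambda\bigl(1+\alpha R_{jS}|\hat\beta_S|\bigr).
\end{align*}
Substituting the just-derived $y-X_S\hat\beta_S=X_SU^{-1}V+\epsilon$ and, under (\ref{eq:kkts}), $|\hat\beta_S|=|\beta_S^*-U^{-1}V|$, and stacking over $j\in S^c$ produces exactly (\ref{eq:kktsc}).

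For the converse direction I would define $\hat\beta_S:=\beta_S^*-U^{-1}V$ and $\hat\beta_{S^c}:=0$: condition (\ref{eq:kkts}) certifies $\sgn(\hat\beta_S)=\sgn(\beta_S^*)$ so the $S$-block of the stationarity equation holds by construction, while (\ref{eq:kktsc}) is exactly the subgradient certificate at each $j\in S^c$. The main subtle point, and essentially the only one that is not pure bookkeeping, is the justification that the coordinatewise subdifferential of the nonsmooth, nonconvex term $|\beta|^\top R|\beta|$ is what one naively writes down; this is handled by the coordinatewise convexity argument in the first paragraph, which holds precisely because $R_{jk}\geq 0$.
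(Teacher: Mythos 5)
Your proposal is correct and follows essentially the same route as the paper's proof: write the stationarity (subgradient) conditions, split into the $S$ and $S^c$ blocks under the target sign pattern, solve the $S$-block as $\hat\beta_S=\beta_S^*-U^{-1}V$ using the invertibility of $U$, and substitute into the $S^c$-block to get the inequality \eqref{eq:kktsc}. Your explicit remark that coordinatewise convexity (from $R_{jk}\geq 0$) legitimizes the naive subdifferential of $|\beta|^\top R|\beta|$ is a small point the paper leaves implicit, but it does not change the argument.
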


The proof is given in the supplementary material.
The sign recovery condition is derived from the standard conditions for optimality.
We note that $\alpha = 0$ reduces the condition into the ordinary Lasso condition in \cite{Wainwright:2009}.
The invertible assumption of $U$ is not restrictive because it is true for almost all $\lambda$ if $X_S^T X_S$ is invertible, which is the same assumption as standard analysis of Lasso.

The condition of IILasso is milder than that of Lasso when $R_{SS}$ is small enough, since (\ref{eq:kkts}) is the same as Lasso and (\ref{eq:kktsc}) is easier to be satisfied unless $\alpha R_{S^cS} = 0$.
This implies that IILasso is more favorable than Lasso from the viewpoint of sign recovery. 
In addition, we can see that a large value of $R_{S^cS}$ is favorable for sign recovery.
That is, IILasso tends to succeed the sign recovery if the true active variables have small correlations and the inactive variables are strongly correlated with the true active variables.

Theorem \ref{theorem:sign} is {\it not} the condition of a global optimal solution.
However, if global optimal solutions are finite, they must be a critical point.
Hence, there exists a global optimal solution with correct sign recovery only if (\ref{eq:kkts}) and (\ref{eq:kktsc}) hold.




\subsection{Convergence Rate}

Here we give the convergence rate of the estimation error of our method.
Before we give the statement, the assumption and definition are prepared.
\begin{Assumption}
	\label{ass:Noise}
	\rm
	$(\epsilon_i)_{i=1}^n$ is an i.i.d. sub-Gaussian sequence: $\EE[e^{t\epsilon_i}]\leq e^{\frac{\sigma^2 t^2}{2}}~(\forall t \in \Real)$ for $\sigma > 0$.
\end{Assumption}


\begin{Definition}[Generalized Restricted Eigenvalue Condition ($\GRE(S,C,C')$)]
\label{eq:AssGRE}
Let a set of vectors $\calB(S, C, C')$ for $C>0, C'>0$ be
\begin{align*}
	\calB(S, C, C') := \{ & \beta \in \Real^p \mid
	 \| \beta_{S^c} \|_1
	 + \frac{C'\alpha}{2} |\beta_{S^c}|^\top R_{S^cS^c}|\beta_{S^c}|\\
	& + C' \alpha |\beta_{S^c}|^\top R_{S^cS} |\beta_S + \betastar_S|
	\leq  C \| \beta_S \|_1
	\}.
\end{align*}
Then, we assume $\phi_{\GRE} >0$ where
$$
\phi_{\GRE} = \phi_{\GRE}(S, C, C') := \inf_{ v \in \calB(S, C, C') } \frac{v^\top \frac{1}{n} X^\top X v}{\|v\|_2^2}.
$$
\end{Definition}
Definition \ref{eq:AssGRE} is a generalized notion of the {\it restricted eigenvalue condition} \cite{bickel2009simultaneous,Bhlmann:2011:SHD:2031491}
tailored for our regularization.
One can see that, if $\alpha = 0$ or $C' = 0$, then $\phi_{\GRE}$ is reduced to the ordinary restricted eigenvalue \cite{bickel2009simultaneous,raskutti2010restricted}
for the analysis of Lasso.
Since there are additional terms related to $|\beta|^\top R |\beta|$, the set $\calB(S, C, C')$ is smaller than that for the ordinary restricted eigenvalue if the same $C$ is used.
In particular, the term $|\beta_{S^c}|^\top R_{S^cS} |\beta_S + \betastar_S|$ on the left-hand side strongly restricts the amplitude of coefficients for unimportant variables (especially the variables with large $R_{S^cS}$), and thus promote independence among the selected variables.

GRE condition holds in quite general class of Gaussian design, because RE$(S, C)$ condition is satisfied in general class of Gaussian design \cite{Raskutti:2010}, and Assumption GRE$(S, C, C')$ is milder than Assumption RE$(S, C)$.
Therefore, GRE condition is not so restrictive.


Here, for any $\delta > 0$, we define $\gamma_n$ as
$$
\gamma_n  =\gamma_n(\delta) := \sigma  \sqrt{\frac{2\log(2p/\delta)}{n}}.
$$
Then, we obtain the convergence rate of IILasso as follows.
\begin{Theorem}
	\label{th:OracleIneaulityBetahat}
	Suppose that Assumption \ref{ass:Noise} is satisfied.
	Suppose $R$ satisfy for all $j, k \in S$,
	$$
	0 \leq R_{ jk} \leq D,
	$$
	for some positive constant $D$,
	and the estimator $\betahat$ of \eqref{eq:objective} is approximately minimizing the objective function so that
	\begin{align}
	\Llambda(\betahat) \leq \Llambda(\betastar).
	\label{eq:betahatcondition}
	\end{align}
	Fix any $0 < \delta < 1$, let the regularization parameters satisfy
	$$
	3\gamma_n \leq \lambda_n \ {\rm and} \ \alpha \leq \frac{1}{4 D \| \betastar_S \|_1}.
	$$
	Suppose that Assumption GRE$(S, 3, \frac{3}{2})$ (Definition \ref{eq:AssGRE}) is satisfied.
	Then, it holds that
		\begin{align*}
		\|\betahat - \betastar\|_2^2
		\leq  \frac{16 s \lambda_n^2}{\phi_{\GRE}^2},
	\end{align*}
	with probability $1-\delta$.
\end{Theorem}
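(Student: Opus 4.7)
The plan is to adapt the standard basic-inequality (oracle) argument for Lasso, accounting for the extra bilinear regularizer. Throughout, let $v := \betahat - \betastar$. First I would control the noise: since $\|X_j\|_2^2 = n$ after standardization, each $\tfrac{1}{n}X_j^\top \epsilon$ is sub-Gaussian with parameter $\sigma/\sqrt{n}$, and a union bound over $j=1,\ldots,p$ shows that $\mathcal{E} := \{\|\tfrac{1}{n}X^\top \epsilon\|_\infty \leq \gamma_n\}$ has probability at least $1-\delta$. The rest of the argument is deterministic and runs on $\mathcal{E}$.

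From the hypothesis $\Llambda(\betahat)\leq \Llambda(\betastar)$ and $y = X\betastar + \epsilon$, I would derive the basic inequality
\begin{align*}
\tfrac{1}{2n}\|Xv\|_2^2 \leq \tfrac{1}{n}\epsilon^\top X v + \lambda_n(\|\betastar\|_1 - \|\betahat\|_1) + \tfrac{\lambda_n\alpha}{2}\bigl(|\betastar|^\top R|\betastar| - |\betahat|^\top R|\betahat|\bigr).
\end{align*}
The noise term is bounded by $\gamma_n\|v\|_1 \leq \tfrac{\lambda_n}{3}\|v\|_1$ via H\"older and $\lambda_n \geq 3\gamma_n$, and the $\ell_1$ difference by $\|v_S\|_1 - \|v_{S^c}\|_1$. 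Using $\betastar_{S^c}=0$ and $\betahat_{S^c}=v_{S^c}$, the bilinear difference splits as
\begin{align*}
|\betastar|^\top R|\betastar| - |\betahat|^\top R|\betahat| = \bigl(|\betastar_S|^\top R_{SS}|\betastar_S| - |\betahat_S|^\top R_{SS}|\betahat_S|\bigr) - 2|v_{S^c}|^\top R_{S^c S}|\betahat_S| - |v_{S^c}|^\top R_{S^c S^c}|v_{S^c}|.
\end{align*}
Scaled by $\tfrac{\lambda_n\alpha}{2}$, the last two terms are precisely the quadratic contributions appearing in $\calB(S,3,3/2)$; I would move them to the LHS with the correct signs.

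The main obstacle is bounding the remaining scalar $E := \tfrac{\lambda_n\alpha}{2}(|\betastar_S|^\top R_{SS}|\betastar_S| - |\betahat_S|^\top R_{SS}|\betahat_S|)$ linearly in $\|v_S\|_1$. I would combine two complementary estimates. With $a=|\betahat_S|,\ b=|\betastar_S|$, the identity $b^\top R_{SS} b - a^\top R_{SS} a = (b-a)^\top R_{SS}(a+b)$, the reverse triangle inequality $||\betahat_j|-|\betastar_j||\leq|v_j|$, the bound $R_{jk}\leq D$ on $S$, and $\alpha \leq 1/(4D\|\betastar_S\|_1)$ yield $E \leq \tfrac{\lambda_n}{4}\|v_S\|_1 + \tfrac{\lambda_n}{8\|\betastar_S\|_1}\|v_S\|_1^2$. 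Alternatively, dropping the nonnegative $|\betahat_S|^\top R_{SS}|\betahat_S|$ gives the uniform bound $E \leq \tfrac{\lambda_n}{8}\|\betastar_S\|_1$. The first estimate is sharp when $\|v_S\|_1 \leq \|\betastar_S\|_1$ (giving $E\leq \tfrac{3\lambda_n}{8}\|v_S\|_1$); otherwise the second gives $E\leq \tfrac{\lambda_n}{8}\|v_S\|_1$. Either way $E \leq \tfrac{3\lambda_n}{8}\|v_S\|_1$, which is the key step. Collecting everything, the basic inequality reduces to
\begin{align*}
\tfrac{1}{2n}\|Xv\|_2^2 + \tfrac{\lambda_n\alpha}{2}|v_{S^c}|^\top R_{S^c S^c}|v_{S^c}| + \lambda_n\alpha|v_{S^c}|^\top R_{S^c S}|v_S+\betastar_S| + \tfrac{2\lambda_n}{3}\|v_{S^c}\|_1 \leq \tfrac{41\lambda_n}{24}\|v_S\|_1.
\end{align*}

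Dropping $\tfrac{1}{2n}\|Xv\|_2^2$ and multiplying by $3/(2\lambda_n)$ produces $\|v_{S^c}\|_1 + \tfrac{3\alpha}{4}|v_{S^c}|^\top R_{S^c S^c}|v_{S^c}| + \tfrac{3\alpha}{2}|v_{S^c}|^\top R_{S^c S}|v_S+\betastar_S| \leq \tfrac{41}{16}\|v_S\|_1 < 3\|v_S\|_1$, i.e.\ $v \in \calB(S,3,3/2)$, so Assumption GRE applies and $\tfrac{1}{n}\|Xv\|_2^2 \geq \phi_{\GRE}\|v\|_2^2$. Combined with $\|v_S\|_1 \leq \sqrt{s}\|v\|_2$ and the displayed inequality, this gives $\tfrac{\phi_{\GRE}}{2}\|v\|_2^2 \leq \tfrac{41}{24}\lambda_n\sqrt{s}\|v\|_2 \leq 2\lambda_n\sqrt{s}\|v\|_2$, hence $\|v\|_2 \leq 4\lambda_n\sqrt{s}/\phi_{\GRE}$ and the claimed bound $\|v\|_2^2 \leq 16 s\lambda_n^2/\phi_{\GRE}^2$ follows.
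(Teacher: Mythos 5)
Your proof is correct, and while it follows the same overall architecture as the paper's (basic inequality, sub-Gaussian concentration of $\|\tfrac{1}{n}X^\top\epsilon\|_\infty$, cone membership in $\calB(S,3,\tfrac{3}{2})$, then GRE), it handles the crux --- the difference $|\betastar_S|^\top R_{SS}|\betastar_S| - |\betahat_S|^\top R_{SS}|\betahat_S|$ --- by a genuinely different route. The paper expands this difference termwise to get $2\|R_{SS}|\betastar_S|\|_\infty\|v_S\|_1 + D\|v_S\|_1^2$ and then must make a second pass through the basic inequality to establish the a priori bound $\|v_S\|_1 \leq (3+\tfrac{3}{4}\alpha\|R_{SS}|\betastar_S|\|_\infty)\|\betastar_S\|_1$ before the quadratic term can be absorbed; this is what produces its polynomial constants $\tfrac{4}{3}+\tfrac{5}{2}\alpha D\|\betastar_S\|_1+\tfrac{3}{8}(\alpha D\|\betastar_S\|_1)^2$ and the corresponding cone constant $2+\tfrac{15}{4}\alpha D\|\betastar_S\|_1+\tfrac{9}{16}(\alpha D\|\betastar_S\|_1)^2$, both of which are then checked to lie below $2$ and $3$ respectively under $\alpha D\|\betastar_S\|_1\leq\tfrac{1}{4}$. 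You instead use the factorization $(b-a)^\top R_{SS}(a+b)$ together with a dichotomy on whether $\|v_S\|_1\leq\|\betastar_S\|_1$, falling back on the uniform bound $\tfrac{\lambda_n\alpha}{2}|\betastar_S|^\top R_{SS}|\betastar_S|\leq\tfrac{\lambda_n}{8}\|\betastar_S\|_1$ in the other case; this one-pass argument yields the single clean coefficient $\tfrac{3}{8}$ and hence the sharper numerical constants $\tfrac{41}{24}$ and $\tfrac{41}{16}$ (in fact your argument gives $\|v\|_2^2\leq(\tfrac{41}{12})^2 s\lambda_n^2/\phi_{\GRE}^2<16s\lambda_n^2/\phi_{\GRE}^2$). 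The paper's two-pass version has the advantage of yielding the explicit $\alpha$-dependent bound stated as the Corollary (valid under a weaker GRE assumption with $C$ depending on $\alpha$), which it uses to compare against Lasso; your version is shorter and gives slightly better constants for the theorem as stated, but does not directly recover that refined corollary.
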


The proof is given in the supplementary material.
The obtained convergence rate is roughly evaluated as
$$
\textstyle \|\betahat - \betastar \|_2^2  = O_p\left(\frac{s \log(p)}{n}\right),
$$
which is almost the minimax optimal rate \cite{raskutti2011minimax}.

As is obvious from the proof of Theorem \ref{th:OracleIneaulityBetahat}, we also have a little bit stricter bound
$$
\|\betahat - \betastar\|_2^2
\leq  \frac{\left( \frac{8}{3} + 5 \alpha D \| \betastar_S \|_1 + \frac{3}{4} (\alpha D \| \betastar_S\|_1)^2 \right)^2 s \lambda_n^2}{\phi_{\GRE}^2},
$$
under Assumption GRE$(S, C, \frac{3}{2})$ (Definition \ref{eq:AssGRE}) where $C=2+\frac{15}{4}\alpha D\|\betastar_S\|_1+\frac{9}{16}(\alpha D\|\betastar_S\|_1)^2$,
with high probability.
This proof is also given in the supplementary material.
We can easily see that, when $\alpha = 0$, then the convergence rate analysis is reduced to the standard one for ordinary Lasso
\cite{bickel2009simultaneous,Bhlmann:2011:SHD:2031491}.
Under well ``interpretable'' cases where the true non-zero components $S$ are independent, i.e., $R_{SS}=O$,
the error bounds for Lasso and IILasso are the same except for the term $\phi_{\rm GRE}^2$.
Since $R_{S^cS}$ and $R_{S^cS^c}$ shrink the set of vectors $\cal{B}(S, C, C')$ in Definition \ref{eq:AssGRE}, $\phi_{\rm GRE}$ of IILasso is larger than that of Lasso.
Therefore, our method achieves a better error bound than the ordinary $\ell_1$ regularization in this setting.
In addition, our method has more advantageous when the variables are correlated between informative and non-informative variables.


\subsection{Local Optimality of estimator}
\newcommand{\Sc}{{V}}
\newcommand{\Ddash}{{\bar{D}}}

The objective function of our method is {\it not} necessarily convex in exchange for better statistical properties as observed above.
Our next theoretical interest is about the global optimality of our optimization algorithm (Algorithm \ref{alg:iilasso}).
Since our optimization method is a greedy one, there is no confirmation that
it achieves the global optimum.
However, as we see in this section, the local solution achieves almost the same estimation error as the global optimum satisfying \eqref{eq:betahatcondition}.
For theoretical simplicity, we assume the following a bit stronger condition.
\begin{Assumption}
	\label{ass:localStrongConvRbound}
	There exists $\phi>0$ and $q_n \leq p$ such that, for all $\Sc \subset \{1,\dots,p\}$ satisfying $|\Sc| \leq q_n$ and $\Sc \cap S = \emptyset$, it holds that
	$$
	\frac{1}{n} X_{S \cup \Sc}^\top X_{S \cup \Sc} \succ \phi \mathrm{I}.
	$$
	Moreover, there exists $\Ddash$ such that the maximum absolute value of the eigenvalue of $R$ is bounded as 
	$$
	\sup_{u \in \Real^{S \cup \Sc}} u^\top ( R_{S \cup \Sc, S\cup \Sc}) u \leq \Ddash \|u\|_2^2.
	$$
\end{Assumption}

\begin{Theorem}
	\label{th:LocalOptimality}
	Suppose Assumptions \ref{ass:Noise} and \ref{ass:localStrongConvRbound} are satisfied.
	Assume that $\betatilde$ is a local optimal solution of \eqref{eq:objective} satisfying
	$|\supp(\betatilde)| \leq |S| + q_n$.
	Let the regularization parameters satisfy
	$$
	\gamma_n < \lambda_n \ {\rm and} \ \alpha < \min \left\{ \frac{\sqrt{s}}{2\Ddash\| \betastar \|_2}, \frac{\phi}{2\Ddash\lambda_n}\right\},
	$$
	for any $\delta > 0$.
	Then, $\betatilde$ should satisfy
	\begin{align*}
		\|\betatilde - \betastar\|_2^2 \leq \frac{ 25 s \lambda_n^2}{\phi^2},
	\end{align*}
	with probability $1-\delta$.
\end{Theorem}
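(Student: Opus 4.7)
The plan is to exploit the first-order optimality of the local minimum $\betatilde$ together with the restricted strong convexity of Assumption \ref{ass:localStrongConvRbound}. Setting $V := \supp(\betatilde) \setminus S$, $T := S \cup V$, and $h := \betatilde - \betastar$, the error vector $h$ is supported on $T$ with $V$ disjoint from $S$ and $|V| \leq q_n$, so Assumption \ref{ass:localStrongConvRbound} directly yields $\tfrac{1}{n}\|Xh\|_2^2 \geq \phi \|h\|_2^2$. Throughout, I will work on the high-probability event $\|\tfrac{1}{n} X^\top \epsilon\|_\infty \leq \gamma_n$, which holds with probability at least $1-\delta$ by Assumption \ref{ass:Noise} and a standard sub-Gaussian maximal inequality.

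First, I would translate local optimality into a KKT-type relation. Differentiating $\Llambda$ at $\betatilde$ along $\pm e_j$ produces a vector $\tilde g$ with $\tilde g_j = \sgn(\betatilde_j)(1 + \alpha(R|\betatilde|)_j)$ on $\supp(\betatilde)$ and $|\tilde g_j| \leq 1 + \alpha(R|\betatilde|)_j$ otherwise, satisfying $\tfrac{1}{n}X^\top(X\betatilde - y) + \lambda_n \tilde g = 0$. Substituting $y = X\betastar + \epsilon$ and pairing with $h$ gives
\[
\tfrac{1}{n}\|Xh\|_2^2 + \lambda_n h^\top \tilde g = \tfrac{1}{n} h^\top X^\top \epsilon.
\]
A coordinatewise check (using $|\betatilde_j| - \sgn(\betatilde_j)\betastar_j \geq |\betatilde_j| - |\betastar_j|$ on $\supp(\betatilde)$ and the subgradient bound on $S \setminus \supp(\betatilde)$) yields the lower bound
\[
h^\top \tilde g \geq (\|\betatilde\|_1 - \|\betastar\|_1) + \alpha (|\betatilde| - |\betastar|)^\top R |\betatilde|.
\]
Combining this with $|h^\top X^\top \epsilon / n| \leq \gamma_n \|h\|_1$, applying the reverse-triangle bound $\|\betatilde\|_1 - \|\betastar\|_1 \geq \|h_{S^c}\|_1 - \|h_S\|_1$, and absorbing $\|h_{S^c}\|_1$ into the left-hand side via $\gamma_n < \lambda_n$, I would reduce the inequality to
\[
\phi \|h\|_2^2 + \lambda_n B \leq 2\lambda_n \|h_S\|_1 \leq 2\lambda_n \sqrt{s}\,\|h\|_2,
\]
where $B := \alpha (|\betatilde| - |\betastar|)^\top R |\betatilde|$.

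The hard part will be handling the possibly negative cross term $B$, which is where the non-convexity of the penalty shows up. To bound $|B|$, I would combine Cauchy--Schwarz, the elementary inequality $\bigl||\betatilde_j| - |\betastar_j|\bigr| \leq |h_j|$, and the spectral bound on $R_{TT}$ from Assumption \ref{ass:localStrongConvRbound} to obtain
\[
|B| \leq \alpha \Ddash \|h\|_2 \|\betatilde\|_2 \leq \alpha \Ddash \|h\|_2 \bigl(\|\betastar\|_2 + \|h\|_2\bigr).
\]
The two conditions on $\alpha$ are calibrated for precisely the two resulting pieces: $\alpha < \sqrt{s}/(2\Ddash\|\betastar\|_2)$ yields $\lambda_n \alpha \Ddash \|\betastar\|_2\|h\|_2 < \tfrac{1}{2}\lambda_n\sqrt{s}\,\|h\|_2$, and $\alpha < \phi/(2\Ddash\lambda_n)$ yields $\lambda_n \alpha \Ddash \|h\|_2^2 < \tfrac{\phi}{2}\|h\|_2^2$. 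Substituting both into the displayed inequality leaves $\tfrac{\phi}{2}\|h\|_2^2 \leq \tfrac{5}{2}\lambda_n\sqrt{s}\,\|h\|_2$, whence $\|h\|_2 \leq 5\lambda_n\sqrt{s}/\phi$ and the claim $\|\betatilde - \betastar\|_2^2 \leq 25 s \lambda_n^2/\phi^2$ follows. The entire argument hinges on engineering these two-sided controls on $\alpha$ so that the non-convex remainder splits into a piece absorbed by $\phi\|h\|_2^2$ and a piece absorbed by $\lambda_n \sqrt{s}\|h\|_2$; once $B$ is tamed this way, the rest is a standard Lasso-style manipulation.
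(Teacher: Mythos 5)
Your proof is correct, but it follows a genuinely different route from the paper's. You work directly from the first-order stationarity conditions at $\betatilde$ (a generalized KKT system for the nonsmooth, nonconvex penalty), pair the resulting identity with the error vector $h=\betatilde-\betastar$, invoke the restricted eigenvalue lower bound $\tfrac{1}{n}\|Xh\|_2^2\geq\phi\|h\|_2^2$, and tame the nonconvex cross term $B=\alpha(|\betatilde|-|\betastar|)^\top R|\betatilde|$ via Cauchy--Schwarz and the spectral bound $\Ddash$ on $R_{TT}$ --- exactly the classical ``basic inequality'' template for Lasso oracle bounds, adapted to the extra penalty. The paper instead introduces the oracle least-squares estimator $\betacheck$ supported on $S$, moves from $\betatilde$ a small step $h$ along the segment toward $\betacheck$, expands $\Llambda(\beta(h))-\Llambda(\betatilde)$ to first order in the step size, and argues that if $\|\betatilde-\betacheck\|_2>4\sqrt{s}\lambda_n/\phi$ the first-order term is strictly negative, contradicting local optimality; the triangle inequality with $\|\betacheck-\betastar\|_2\leq\sqrt{s}\lambda_n/\phi$ then yields the same constant $5$. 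Your approach buys a shorter argument with no $O(h^2)$ bookkeeping and no detour through $\betacheck$, and it makes transparent why the two caps on $\alpha$ are calibrated as they are (one piece absorbed into $\phi\|h\|_2^2/2$, the other into $\lambda_n\sqrt{s}\|h\|_2/2$); the paper's line-search argument buys the ability to avoid writing down the directional-derivative/subgradient structure of the nonconvex penalty explicitly, using only optimality along a single direction. One shared caveat: both arguments apply Assumption \ref{ass:localStrongConvRbound} to $V=\supp(\betatilde)\setminus S$, whose cardinality is only guaranteed to be at most $q_n$ under the implicit reading of the sparsity hypothesis $|\supp(\betatilde)|\leq|S|+q_n$ as bounding the support outside $S$; your proof inherits this from the theorem statement exactly as the paper's does, so it is not a defect of your argument.
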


The proof is given in the supplementary material.
Theorem \ref{th:LocalOptimality} indicates that every local optimum $\betatilde$ achieves the same convergence rate with the ideal optimal solution $\betahat$.
In other words, there is {\it no local optimal solution} with sparsity level $|S| + q_n$ far from the true vector $\betastar$.
In the theorem, we assumed the sparsity of  the local optimal solution $\betatilde$.
Such a sparse solution can be easily obtained by running CDA multiple times from different initial solutions.

\section{Numerical Experiments}

\subsection{Synthetic Data}

We consider the case in which the true active variables are uncorrelated and many inactive variables are strongly correlated with the active variable.
If all of active and inactive variables are uncorrelated, it is easy to estimate which is active or inactive.
On the other hand, if the inactive variables are strongly correlated with the active variables, it is hard to distinguish which one is active.
We simulate such a situation and validate the effectiveness of IILasso.

First, we generated a design matrix $X \in \mathbb{R}^{n \times p}$
from the Gaussian distribution of $\mathcal{N}(0, \Sigma)$
where $\Sigma = {\rm Diag}(\Sigma^{(1)}, \cdots, \Sigma^{(b)})$ was a block diagonal matrix
whose element $\Sigma^{(l)} \in \mathbb{R}^{q \times q}$ was  $\Sigma^{(l)}_{jk} = 0.95$ for $j \neq k$ and $\Sigma^{(l)}_{jk} = 1$ for $j = k$.
We set $n = 50$, $p = 100$, $b = 10$ and $q = 10$.
Thus, there were 10 groups containing 10 strongly correlated variables.
Next, we generated an objective variable $y$ by the true active variables $X_1, X_{11}, X_{21}, \cdots, X_{91}$,
such that $y = 10 X_1 - 9 X_{11} + 8 X_{21} - 7X_{31} + \cdots + 2X_{81} - X_{91} + \epsilon$,
with a standard Gaussian noise $\epsilon$.
Each group included one active variable.
We generated three datasets for training, validation and test as above.

Then, we compared the performance of Lasso, SCAD \cite{Zou:2001}, MCP \cite{Zhang:2010}, EGLasso and IILasso.
Evaluation criteria are prediction error (mean squared error),
estimation error ($\ell_2$ norm between the true and estimated coefficients)
and model size (the number of non-zero coefficients).
SCAD and MCP are representative methods of folded concave penalty, so their objective functions are non-convex, which are the same as our method.
They have a tuning parameter $\gamma$; we set $\gamma = 2.5, 3.7, 10, 20, 100, 1000$ for SCAD and $\gamma = 1.5, 3, 10, 20, 100, 1000$ for MCP.
EGLasso has a parameter $\lambda_2$; we set $\lambda_2/\lambda_1 = 0.01, 0.1, 1, 10, 100, 1000$.
For EGLasso, we used the true group information beforehand.
We used {\tt R} packages {\tt ncvreg} \cite{Breheny:2011} for Lasso, SCAD and MCP, and {\tt sparsenet} \cite{Mazumder:2011} for MCP.
One can solve MCP using either {\tt ncvreg} or {\tt sparsenet}, but they differ in their optimization algorithms and ways of initialization.
For IILasso, we defined $R_{jk} = |r_{jk}| / (1 - |r_{jk}|)$ for $j \neq k$ and $R_{jk} = 0$ for $j=k$.
Hence, $R_{SS}$ takes small values if active variables are independent,
and $R_{SS^c}$ and $R_{S^cS^c}$ take large values if inactive variables are strongly correlated with other variables,
which is favorable from the theoretical results obtained in Section~\ref{sec:theory}.
We set $\alpha = 0.01, 0.1, 1, 10, 100, 1000$.
We tuned the above parameters using validation data and calculated errors using test data.
We iterated this procedure 500 times and evaluated the averages and standard errors.

Table \ref{tbl:syntheticdata} shows the performances with their standard error in parentheses.
IILasso achieved the best prediction and estimation among all of them.
This was because our penalty term excluded the correlations and avoided overfitting.
Moreover, the model size of IILasso was much less than those of Lasso and EGLasso, and comparable to MCP.
As a whole, IILasso could distinguish the true active variables.

\begin{table}[t]
	\begin{center}
		\normalsize
		\caption{Results of synthetic data}
		\label{tbl:syntheticdata}
		\small
	\scalebox{0.9}[0.9]{
	\begin{tabular}{|l|r|r|r|} \hline
		& \multicolumn{1}{c|}{prediction} & \multicolumn{1}{c|}{estimation} & \multicolumn{1}{c|}{model} \\
		& \multicolumn{1}{c|}{error} & \multicolumn{1}{c|}{error} & \multicolumn{1}{c|}{size} \\ \hline
		Lasso({\tt ncvreg}) & 2.67(0.05) & 4.44(0.06) & 34.1(0.46) \\ \hline
		SCAD({\tt ncvreg}) & 1.52(0.02) & 1.79(0.04) & 14.6(0.23) \\ \hline
		MCP({\tt ncvreg}) & 1.53(0.02) & 1.79(0.04) & 14.6(0.24) \\ \hline
		MCP({\tt sparsenet}) & 2.41(0.11) & 3.15(0.13) & {\bf 13.4(0.28)} \\ \hline
		EGLasso & 2.60(0.04) & 4.36(0.05) & 33.3(0.32) \\ \hline
		{\bf IILasso~(ours)} & {\bf 1.45(0.02)} & {\bf 1.40(0.04)} & {\bf 13.5(0.23)} \\ \hline
	\end{tabular}
	}
	\end{center}
\end{table}


\subsection{Real Data: Gene Expression Data}


We applied our method to various gene expression data to validate its effectiveness for real applications.
We used the following 10 datasets:
`alon'~\cite{Alon:1999} (colon cancer),
`chiaretti'~\cite{Chiaretti:2004} (leukemia),
`gordon'~\cite{Gordon:2002} (lung cancer),
`gravier'~\cite{Gravier:2010} (breast cancer),
`pomeroy'~\cite{Pomeroy:2002} (central nervous system disorders),
`shipp'~\cite{Shipp:2002} (lymphoma),
`singh'~\cite{Singh:2002} (prostate cancer),
`subramanian'~\cite{Subramanian:2005} (miscellaneous),
`tian'~\cite{Tian:2003} (myeloma),
`west'~\cite{West:2001} (breast cancer).
All of these data are provided by {\tt R} package {\tt datamicroarray}.
The abstract of these datasets is described in Table \ref{tbl:realdata}.
All datasets are small-sample high-dimensional 
DNA microarray data.

Since the objective variable is binary, logistic regression was applied.
Logistic regression of Lasso, SCAD and MCP is supported by {\tt ncvreg} ({\tt sparsenet} does not support logistic regression).
EGLasso and IILasso can also be formulated as logistic regression.
For details, see the supplementary material.
We used the same settings on regularization parameters as described in Section 4.1.
We evaluated the log-likelihood, misclassification error and model size using ten-fold cross validation.

The results are given in Figure \ref{fig:realdata}.
From the viewpoint of log-likelihood, IILasso won in 5 out of 10 cases.
MCP showed similar performance to IILasso, but only won 2 cases.
Lasso, EGLasso and SCAD showed similar performance, but fell behind IILasso and MCP.
We can see the similar tendency of misclassification errors.
IILasso won in 9 out of 10 cases including 5 ties.
IILasso lost only 1 case, in which MCP won.
In addition, the model size of IILasso was always smaller than Lasso and the smallest in 6 out of 10 cases including 1 tie.
In particular, IILasso showed a much smaller misclassification error with a much smaller model size for the dataset `subramanian', and smaller model sizes for the datsets `gordon', `gravier', `pomeroy' and `west' among the comparable methods with almost the same misclassification errors.
As a whole, IILasso could construct accurate models with a small number of variables.

\begin{table}[t]
	\begin{center}
		\normalsize
		\caption{Abstract of microarray data}
		\label{tbl:realdata}
		\small
		\begin{tabular}{|l|r|r|}\hline
			data & \# samples & \# dimensions \\ \hline
			alon & 62 & 2000 \\ \hline
			chiaretti & 111 & 12625 \\ \hline
			gordon & 181 & 12533 \\ \hline
			gravier & 168 & 2905 \\ \hline
			pomeroy & 60 & 7128 \\ \hline
			shipp & 58 & 6817 \\ \hline
			singh & 102 & 12600 \\ \hline
			subramanian & 50 & 10100 \\ \hline
			tian & 173 & 12625 \\ \hline
			west & 49 & 7129 \\ \hline
		\end{tabular}
	\end{center}
\end{table}

\begin{figure*}[t]
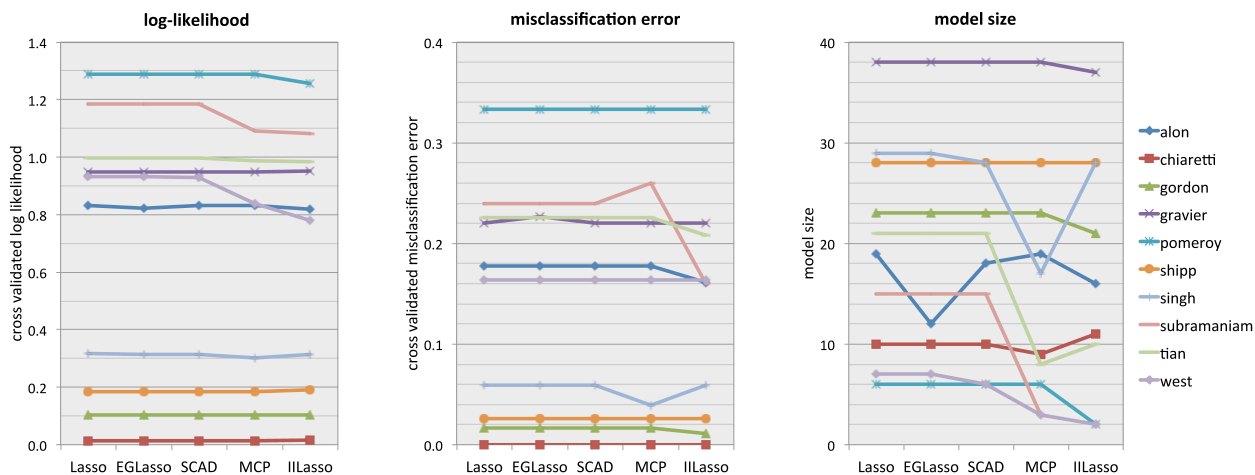

	\begin{tabular}{ccc}
		\begin{minipage}[t]{0.3\linewidth}
			\includegraphics[keepaspectratio,scale=0.35]{loglik4.eps}
		\end{minipage}
		\begin{minipage}[t]{0.3\linewidth}
			\includegraphics[keepaspectratio,scale=0.35]{miscl4.eps}
		\end{minipage}
		\begin{minipage}[t]{0.3\linewidth}
			\includegraphics[keepaspectratio,scale=0.35]{modelsize4.eps}
		\end{minipage}
	\end{tabular}
	\caption{Results of 10 microarray datasets}
	\label{fig:realdata}
\end{figure*}

\section{Conclusion}
In this paper, we proposed a new regularization method ``IILasso''.
IILasso reduces correlations among the active variables, hence it is easy to decompose and interpret the model.
We showed that the sign recovery condition of IILasso is milder than that of Lasso when the true active variables are uncorrelated with each other.
The convergence rate of IILasso also has a better performance compared to that of Lasso.
In addition, we showed that every local optimum by coordinate descent algorithm has the same order convergence rate as the global optimum.
Finally, we verified the effectiveness of IILasso by synthetic and real data analyses using 10 gene expression data, and we saw that IILasso was superior 
in many cases on 
high-dimensional data.

\section*{Acknowledgement}
TS was partially supported by MEXT kakenhi (25730013, 25120012, 26280009, 15H01678 and 15H05707), JST-PRESTO and JST-CREST. HF was partially supported by MEXT kakenhi 17K00065.

\bibliographystyle{plain}
\bibliography{nips_2017_takada,nips_2017_suzuki}

\renewcommand{\thesection}{\Alph{section}}
\numberwithin{equation}{section}
\numberwithin{Theorem}{section}
\renewcommand{\thealgorithm}{\thesection.\arabic{algorithm}}
\setcounter{section}{0}
\setcounter{algorithm}{0}


\onecolumn

\section{Proof of Theorem \ref{theorem:sign}}
\begin{proof}
	By standard conditions for optimality, $\hat{\beta}$ is a critical point if and only if there exits a subgradient
	$\hat{z} \in \partial \|\hat{\beta}\|_1 := \{ \hat{z} \in \mathbb{R}^p | \hat{z_j} = \sgn(\hat{\beta}_j) \ {\rm for} \ \hat{\beta}_j \neq 0, \ |\hat{z_j}| \leq 1 \ {\rm otherwise} \}$ such that $\partial_{\hat{\beta}} L(\beta) = 0$.
	Because $\partial_{\beta} \frac{1}{2}|\beta|^\top R |\beta| = {\rm Diag}(R|\beta|) z$, the condition $\partial_{\hat{\beta}} L(\beta) = 0$ yields
	\begin{align}
	-\frac{1}{n}X^\top(y-X\hat{\beta}) + \lambda \hat{z} + \lambda \alpha {\rm Diag} \left( R |\hat{\beta}| \right) \hat{z} = 0. \label{eq:kkt}
	\end{align}
	Substituting $y = X\beta^* + \epsilon$ in (\ref{eq:kkt}), we have
	\begin{align}
	-\frac{1}{n}X^\top(X (\beta^* - \hat{\beta}) + \epsilon) +
	\lambda \hat{z} + \lambda \alpha{\rm Diag} \left(R |\hat{\beta}|\right) \hat{z} = 0.\label{eq:kkt2}
	\end{align}
	Let the true active set $S=\{1, \cdots, s\}$ and inactive set $S^c=\{s+1, \cdots, p\}$ without loss of generality, then (\ref{eq:kkt2}) is turned into
	\begin{align}
	\frac{1}{n} X_S^\top X_S \left( \hat{\beta}_S - \beta_S^* \right)
	+ \frac{1}{n} X_S^\top X_{S^c} \hat{\beta}_{S^c}
	- \frac{1}{n} X_S^\top \epsilon
	+ \lambda \hat{z}_S + \lambda \alpha{\rm Diag}\left(R_{SS}|\hat{\beta}_S|\right) \hat{z}_S = 0,\label{eq:kkts2} \\
	\frac{1}{n} X_{S^c}^\top X_S \left( \hat{\beta}_S - \beta_S^* \right)
	+ \frac{1}{n} X_{S^c}^\top X_{S^c} \hat{\beta}_{S^c}
	- \frac{1}{n} X_{S^c}^\top \epsilon
	+ \lambda \hat{z}_{S^c} + \lambda \alpha{\rm Diag}\left(R_{S^cS}|\hat{\beta}_S|\right) \hat{z}_{S^c}
	= 0.\label{eq:kktsc2}
	\end{align}
	Hence, there exists a critical point with correct sign recovery if and only if there exists $\hat{\beta}$ and $\hat{z}$ such that (\ref{eq:kkts2}), (\ref{eq:kktsc2}), $\hat{z} \in \partial \|\hat{\beta}\|_1$ and $\sgn(\hat{\beta}) = \sgn(\beta^*)$.
	The latter two conditions can be written as
	\begin{align}
	\hat{z}_S &= \sgn(\beta_S^*), \label{eq:z_s}\\
	|\hat{z}_{S^c}| &\leq 1, \label{eq:z_sc}\\
	\sgn(\hat{\beta}_S) &= \sgn(\beta_S^*), \label{eq:beta_s}\\
	\hat{\beta}_{S^c} &= 0. \label{eq:beta_sc}
	\end{align}
	The condition (\ref{eq:z_s}) and (\ref{eq:beta_sc}) yield
	\begin{align}
	\frac{1}{n} X_S^\top X_S \left( \hat{\beta}_S - \beta_S^* \right)
	- \frac{1}{n} X_S^\top \epsilon
	+ \lambda \sgn(\beta_S^*) + \lambda \alpha{\rm Diag}\left(R_{SS}|\hat{\beta}_S|\right) \sgn(\beta_S^*) = 0, \label{eq:kkts3} \\
	\frac{1}{n} X_{S^c}^\top X_S \left( \hat{\beta}_S - \beta_S^* \right)
	- \frac{1}{n} X_{S^c}^\top \epsilon
	+ \lambda \hat{z}_{S^c} + \lambda \alpha{\rm Diag}\left(R_{S^cS}|\hat{\beta}_S|\right) \hat{z}_{S^c}
	= 0.\label{eq:kktsc3}
	\end{align}
	Since
	\begin{align*}
	{\rm Diag}(R_{SS}|\hat{\beta}_S|)\sgn(\beta_S^*)
	=& {\rm Diag}(\sgn(\beta_S^*))R_{SS}|\hat{\beta}_S| \nonumber \\
	=& {\rm Diag}(\sgn(\beta_S^*))R_{SS}{\rm Diag}(\sgn(\beta_S^*))\hat{\beta}_S,
	\end{align*}
	(\ref{eq:kkts3}) can be rewritten as
	\begin{align*}
	U(\hat{\beta}_S - \beta_S^*) + V = 0,
	\end{align*}
	where
	\begin{align*}
	U &:= \frac{1}{n} X_S^\top X_S + \lambda\alpha {\rm Diag}(\sgn(\beta_S^*)) R_{SS} {\rm Diag}(\sgn(\beta_S^*)), \\
	V &:= \lambda \sgn(\beta_S^*) + \lambda \alpha{\rm Diag}(\sgn(\beta_S^*)) R_{SS} {\rm Diag}(\sgn(\beta_S^*)) \beta_S^* - \frac{1}{n} X_S^\top \epsilon.
	\end{align*}
	If we assume $U$ is invertible, we obtain
	\begin{align}
	\hat{\beta}_S = \beta_S^* - U^{-1} V. \label{eq:kkts4}
	\end{align}
	Substituting this in (\ref{eq:kktsc3}), we have
	\begin{align*}
	\frac{1}{n} X_{S^c}^\top X_S \left( -U^{-1} V \right)
	- \frac{1}{n} X_{S^c}^\top \epsilon
	+ \lambda \hat{z}_{S^c}
	+ \lambda \alpha{\rm Diag}\left( R_{S^cS}|\beta_S^* - U^{-1} V| \right) \hat{z}_{S^c}
	= 0,
	\end{align*}
	that is,
	\begin{align}
	\left( 1 + \alpha{\rm Diag}\left( R_{S^cS}|\beta_S^* - U^{-1} V| \right) \right) \lambda \hat{z}_{S^c}
	= \frac{1}{n} X_{S^c}^\top X_S U^{-1} V + \frac{1}{n} X_{S^c}^\top \epsilon. \label{eq:kktsc4}
	\end{align}
	Combining (\ref{eq:z_sc}), (\ref{eq:beta_s}), (\ref{eq:kkts4}) and (\ref{eq:kktsc4}), we have the following conditions:
	\begin{align*}
	\sgn(\beta_S^* - U^{-1} V) &= \sgn(\beta_S^*), \\
	\left| \frac{1}{n} X_{S^c}^\top X_S U^{-1} V + \frac{1}{n} X_{S^c}^\top \right|
	&\leq
	\lambda \left(1 + \alpha R_{S^cS}|\beta_S^* - U^{-1} V| \right).
	\end{align*}


\end{proof}


\section{Proof of Theorem \ref{th:OracleIneaulityBetahat}}
First, we prepare the following lemma.
\setcounter{Theorem}{0}
\begin{Lemma}
	\label{lemm:gamman}
	\rm
	Suppose that Assumption \ref{ass:Noise} and
	\begin{equation*}
	\frac{1}{n} \sum_{i=1}^n X_{ij}^2 \leq1~~~(\forall j=1,\dots,p),
	\end{equation*}
	are satisfied.
	For $\forall \delta > 0$, let $\gamma_n :=\gamma_n(\delta)$ be
	$$
	\gamma_n := \sigma  \sqrt{\frac{2\log(2p/\delta)}{n}}.
	$$
	Then, we have that
	$$
	P\left(\left\|\frac{1}{n} X^\top \epsilon \right\|_\infty \geq \gamma_n \right) \leq \delta.
	$$
\end{Lemma}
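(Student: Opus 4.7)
The plan is to reduce the claim to a standard maximal inequality for sub-Gaussian sums via a union bound. First I would fix a coordinate $j \in \{1,\dots,p\}$ and consider the scalar $\frac{1}{n} X_j^\top \epsilon = \frac{1}{n}\sum_{i=1}^n X_{ij}\epsilon_i$. Because the $\epsilon_i$ are independent and satisfy $\EE[e^{t\epsilon_i}] \leq e^{\sigma^2 t^2/2}$ by Assumption \ref{ass:Noise}, the weighted sum $\sum_{i=1}^n X_{ij}\epsilon_i$ has moment generating function bounded by $\exp\!\left(\frac{\sigma^2 t^2}{2}\sum_i X_{ij}^2\right)$, and by the column-norm assumption $\sum_i X_{ij}^2 \leq n$ this is at most $\exp(n\sigma^2 t^2/2)$. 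Thus $\frac{1}{n}X_j^\top \epsilon$ is sub-Gaussian with parameter at most $\sigma^2/n$.

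The standard Chernoff calculation then yields the two-sided tail bound
\begin{equation*}
P\!\left(\left|\tfrac{1}{n} X_j^\top \epsilon\right| \geq t\right) \leq 2 \exp\!\left(-\frac{n t^2}{2\sigma^2}\right)
\end{equation*}
for every $t > 0$. Applying a union bound over $j = 1,\dots,p$ gives
\begin{equation*}
P\!\left(\left\|\tfrac{1}{n} X^\top \epsilon\right\|_\infty \geq t\right) \leq 2 p \exp\!\left(-\frac{n t^2}{2\sigma^2}\right).
\end{equation*}
Finally I would set $t = \gamma_n = \sigma \sqrt{2\log(2p/\delta)/n}$, which makes the right-hand side equal to $\delta$, completing the proof.

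There is no real obstacle here: the argument is routine and consists of (i) the MGF tensorization for independent sub-Gaussians, (ii) the Chernoff bound, and (iii) a union bound. The only care needed is to track the constants so that the final threshold lines up exactly with the stated definition of $\gamma_n$; this just amounts to solving $2p \exp(-n\gamma_n^2/(2\sigma^2)) = \delta$ for $\gamma_n$.
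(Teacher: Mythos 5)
Your proof is correct and follows essentially the same route as the paper's: a Chernoff/Hoeffding sub-Gaussian tail bound for each coordinate $\frac{1}{n}X_j^\top\epsilon$ followed by a union bound over $j$, with the threshold $\gamma_n$ chosen to make $2p\exp(-n\gamma_n^2/(2\sigma^2))=\delta$. If anything, your MGF tensorization step, which bounds the sum's MGF by $\exp(\sigma^2 t^2 \sum_i X_{ij}^2/2)$ and then uses $\sum_i X_{ij}^2 \leq n$, is stated more carefully than the paper's corresponding claim about the individual terms $\xi_i = X_{ij}\epsilon_i$.
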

\begin{proof}
	The assertion can be shown in the standard way.
	First notice that
	\begin{align*}
	P\left( \left\|\frac{1}{n} X^\top \epsilon \right\|_\infty \geq \gamma \right)
	&
	= P\left( \max_{1\leq j \leq p} \left|\frac{1}{n} \sum_{i=1}^n \epsilon_i X_{ij}  \right| \geq \gamma \right) \\
	&
	= P\left( \bigcup_{1\leq j \leq p} \left\{ \left|\frac{1}{n} \sum_{i=1}^n \epsilon_i X_{ij}  \right| \geq \gamma \right\} \right) \\
	&
	\leq  \sum_{j=1}^p P\left(  \left|\frac{1}{n} \sum_{i=1}^n \epsilon_i X_{ij}  \right| \geq \gamma  \right)
	\leq
	p \max_{1\leq j \leq p} P\left(  \left|\frac{1}{n} \sum_{i=1}^n \epsilon_i X_{ij}  \right| \geq \gamma  \right).
	\end{align*}
	Since $\frac{1}{n}\sum_{i=1}^n X_{ij}^2 \leq 1$, $\xi_i = X_{ij} \epsilon_i$ satisfies $\EE[e^{t\xi_i}] \leq e^{\sigma^2 t^2/2} \ \forall t \in \Real$.
	Hence, applying Hoeffding's inequality, we obtain the assertion.
\end{proof}

Then, we derive Theorem \ref{th:OracleIneaulityBetahat}.
\begin{proof}
	By $\Llambda(\betahat) \leq \Llambda(\betastar)$ 
	and $y=X\betastar + \epsilon$, it holds that
	\begin{align}
	&
	\frac{1}{2n}\|X (\betahat - \betastar) - \epsilon \|_2^2
	+ \lambda_n \psi(\betahat) \leq
	\frac{1}{2n}\| \epsilon \|_2^2
	+ \lambda_n \psi(\betastar) \notag \\
	\Rightarrow~~
	&
	\frac{1}{2n}\|X (\betahat - \betastar)\|_2^2 + \lambda_n \psi(\betahat) \leq
	\frac{1}{n} \epsilon^\top X (\betahat - \betastar) + \lambda_n   \psi(\betastar),
	\label{eq:LassoBasicIneqBeforConcentrate}
	\end{align}
	where $\psi(\beta) = \lambda_n \left( \|\beta\|_1 + \frac{\alpha}{2} |\beta|^\top R |\beta| \right)$.
	By Lemma \ref{lemm:gamman}, it holds that
	\begin{align*}
	P\left(\left\|\frac{1}{n}X^\top \epsilon \right\|_\infty > \gamma_n \right) \leq \delta.
	\end{align*}
	Hereafter, we assume that the event $\{\left\|\frac{1}{n}X^\top \epsilon \right\|_\infty \leq \gamma_n \}$ is happening.

	Then, if $\gamma_n \leq \lambda_n / 3$, by (\ref{eq:LassoBasicIneqBeforConcentrate}),
	\begin{align}
	\frac{1}{2n}\|X (\betahat - \betastar)\|_2^2 + \lambda_n \psi(\betahat)
	&
	\leq
	\frac{1}{n} \|\epsilon^\top X\|_\infty \|\betastar - \betahat\|_1 + \lambda_n \psi(\betastar) \notag \\
	&
	\leq
	\gamma_n \|\betastar - \betahat\|_1 + \lambda_n \psi(\betastar)
	\leq \frac{1}{3} \lambda_n
	\|\betastar - \betahat\|_1 + \lambda_n \psi(\betastar).
	\label{eq:LassoConvRoot}
	\end{align}
	Since $$
	\|\betahat - \betastar\|_1 = \|\betahat_{S} - \betastar_{S}\|_1 + \|\betahat_{S^c} - \betastar_{S^c}\|_1 =
	\|\betahat_{S} - \betastar_{S}\|_1 + \|\betahat_{S^c}\|_1,
	$$
	and
	\begin{align*}
	|\betastar_S|^\top R_{SS} |\betastar_S| -
	|\betahat_S|^\top R_{SS} |\betahat_S|
	& \leq
	\sum_{(j,k) \in S \times S } R_{jk} |\betastar_j \betastar_k- \betahat_j \betahat_k| \\
	&
	\leq
	2 \sum_{(j,k) \in S \times S } R_{jk} |\betastar_j (\betastar_k - \betahat_k) |
	+
	\sum_{(j,k) \in S \times S } R_{jk} |(\betastar_j - \betahat_j)(\betastar_k - \betahat_k)| \\
	&
	=
	2 |\betastar_S|^\top R_{SS}  |\betastar_S - \betahat_S |
	+
	|\betastar_S - \betahat_S |  ^\top  R_{SS}  |\betastar_S - \betahat_S | \\
	&
	\leq
	2 \|R_{SS}  |\betastar_S|\|_{\infty} \| \betastar_S - \betahat_S \|_1
	+
	D  \|\betastar_S - \betahat_S \|_1^2,
	\end{align*}
	we obtain that
	\begin{align}
	& \frac{1}{2 n}\|X (\betahat - \betastar)\|_2^2 + \lambda_n
	\left( \|\betahat_S\|_1 +   \|\betahat_{S^c}\|_1 + \frac{\alpha}{2}|\betahat_S|^\top R_{SS}|\betahat_S|
	+ \frac{\alpha}{2}\sum_{(j,k) \notin S\times S} R_{jk} |\betahat_j \betahat_k| \right) \notag \\
	&
	\leq
	\frac{1}{3} \lambda_n (\|\betahat_{S} - \betastar_{S}\|_1 + \|\betahat_{S^c}\|_1)
	+ \lambda_n \left(\|\betastar_S\|_1 + \frac{\alpha}{2}|\betastar_S|^\top R_{SS} |\betastar_S|\right)  \notag
	\\
	\Rightarrow &
	\frac{1}{2n}\|X (\betahat - \betastar)\|_2^2 + \lambda_n
	\left( \frac{2}{3} \|\betahat_{S^c}\|_1
	+ \frac{\alpha}{2}\sum_{(j,k) \notin S\times S} R_{jk}|\betahat_j \betahat_k| \right) \notag \\
	&
	\leq
	\frac{1}{3} \lambda_n \|\betahat_{S} - \betastar_{S}\|_1
	+ \lambda_n \left(\|\betastar_S\|_1 - \|\betahat_S\|_1
	+ \alpha \|R_{SS}  |\betastar_S|\|_{\infty} \| \betastar_S - \betahat_S \|_1
	+ \frac{\alpha D}{2}  \|\betastar_S - \betahat_S \|_1^2\right) \notag \\
	\Rightarrow &
	\frac{1}{2n}\|X (\betahat - \betastar)\|_2^2 + \lambda_n
	\left( \frac{2}{3} \|\betahat_{S^c}\|_1
	+ \frac{\alpha}{2} \sum_{(j,k) \notin S\times S} R_{jk} |\betahat_j \betahat_k| \right) \notag \\
	&
	\leq
	\lambda_n \left(\frac{4}{3} \|\betahat_{S} - \betastar_{S}\|_1
	+ \alpha \|R_{SS}  |\betastar_S|\|_{\infty} \| \betastar_S - \betahat_S \|_1
	+ \frac{\alpha D}{2} \|\betastar_S - \betahat_S \|_1^2 \right).
	\label{eq:FirstBoundRight}
	\end{align}

	On the other hand, (\ref{eq:LassoConvRoot}) also gives
	\begin{align*}
	&  \|\betahat_S \|_1 + \|\betahat_{S^c} \|_1
	\leq
	\frac{1}{3} (\|\betahat_{S} - \betastar_{S}\|_1 + \|\betahat_{S^c}\|_1)
	+ \|\betastar_S\|_1 + \frac{\alpha}{2} |\betastar_S|^\top R_{SS} |\betastar_S| \\
	\Rightarrow ~~~
	& \frac{2}{3} \| \betahat_S - \betastar_S \|_1 + \frac{2}{3}\| \betahat_{S^c} \|_1 \leq 2 \| \betastar_S\|_1 + \frac{\alpha}{2} |\betastar_S|^\top R_{SS} |\betastar_S| \\
	\Rightarrow ~~~
	& \| \betahat_S - \betastar_S \|_1 \leq 3 \| \betastar_S\|_1 + \frac{3}{4} \alpha |\betastar_S|^\top R_{SS} |\betastar_S|\\
	\Rightarrow ~~~
	& \| \betahat_S - \betastar_S \|_1 \leq
	\left( 3 + \frac{3}{4} \alpha \| R_{SS} |\betastar_S| \|_{\infty} \right) \|\betastar_{S}\|_1 .
	\end{align*}

	Therefore, (\ref{eq:FirstBoundRight}) gives
	\begin{align}
	& \frac{2}{3} \|\betahat_{S^c}\|_1 + \frac{\alpha}{2} \sum_{(j,k) \notin S\times S} R_{jk} |\betahat_j \betahat_k| \nonumber \\
	& \leq
	\left(\frac{4}{3} + \alpha \|R_{SS}  |\betastar_S|\|_{\infty}
	+ \frac{3}{2}\alpha D \|\betastar_{S}\|_1
	\left(1 + \frac{\alpha}{4} \| R_{SS} |\betastar_S| \|_{\infty}\right) \right) \|\betahat_{S} - \betastar_{S}\|_1.
	\label{eq:BCconstraintcheck1}
	\end{align}

	The second term of the left side is evaluated as 
	\begin{align*}
	\sum_{(j,k) \notin S\times S} R_{jk} |\betahat_j \betahat_k| =
	& \sum_{j \in S^c, k \in S^c} R_{jk} |\betahat_j \betahat_k|
	 + 2 \sum_{j \in S, k \in S^c} R_{jk} |(\betahat_j-\betastar_S+\betastar_S) \betahat_k| \\
	=
	& |\betahat_{S^c}|^\top R_{S^c S^c} |\betahat_{S^c}|
	+ 2 |\betahat_{S^c}|^\top R_{S^c S} |\betahat_S - \betastar_S + \betastar_S|.
	\end{align*}

	Hence, (\ref{eq:BCconstraintcheck1}) gives
	\begin{align}
	&\frac{2}{3} \| \betahat_{S^c} \|_1 + \frac{\alpha}{2} |\betahat_{S^c}|^\top R_{S^cS^c}|\betahat_{S^c}| + \alpha |\betahat_{S^c}|^\top R_{S^cS} |\betahat_S - \betastar_S + \betastar_S| \nonumber \\
	\leq & \left( \frac{4}{3} + \alpha \| R_{SS}|\betastar_S| \|_\infty + \frac{3}{2} \alpha D \| \betastar_S\|_1 \left( 1 + \frac{\alpha}{4} \| R_{SS}|\betastar_S| \|_\infty \right) \right) \| \betahat_S - \betastar_S \|_1 \nonumber \\
	\Rightarrow ~~~ &
	\| \betahat_{S^c} \|_1 + \frac{3}{4} \alpha |\betahat_{S^c}|^\top R_{S^cS^c}|\betahat_{S^c}| + \frac{3}{2} \alpha |\betahat_{S^c}|^\top R_{S^cS} |\betahat_S - \betastar_S + \betastar_S| \nonumber \\
	\leq & \left( 2 + \frac{15}{4} \alpha D \| \betastar_S \|_1 + \frac{9}{16} (\alpha D \| \betastar_S\|_1)^2 \right) \| \betahat_S - \betastar_S \|_1 . \label{eq:BetaStrictConstraint}
	\end{align}

	If $\alpha \leq \frac{1}{4 D \| \betastar_S \|_1}$, we have
	\begin{align*}
	\| \betahat_{S^c} \|_1 + \frac{3}{4} \alpha |\betahat_{S^c}|^\top R_{S^cS^c}|\betahat_{S^c}| + \frac{3}{2} \alpha |\betahat_{S^c}|^\top R_{S^cS} |\betahat_S - \betastar_S + \betastar_S|
	\leq  3 \| \betahat_S - \betastar_S \|_1 .
	\end{align*}

	Therefore, we can see that
	$$
	\Delta \beta \in \calB(S, C, C'),
	$$
	where $\Delta \beta = \betahat - \betastar$, $C=3$ and $C'=\frac{3}{2}$.
	By applying the definition of $\phi_{\GRE}$ to (\ref{eq:FirstBoundRight}), it holds that
	\begin{align*}
	& \frac{\phi_{\GRE}}{2} \|\betahat - \betastar\|_2^2
	\leq \lambda_n \left( \frac{4}{3} + \frac{5}{2} \alpha D \| \betastar_S \|_1 + \frac{3}{8} (\alpha D \| \betastar_S\|_1)^2 \right) \|\betahat_S - \betastar_S\|_1
	\end{align*}
	Because $\|\betahat_S - \betastar_S\|_1^2 \leq s \|\betahat_S - \betastar_S\|_2^2$, we have
	\begin{align}
	&\|\betahat - \betastar\|_2
	\leq  \frac{\left( \frac{8}{3} + 5 \alpha D \| \betastar_S \|_1 + \frac{3}{4} (\alpha D \| \betastar_S\|_1)^2 \right) \sqrt{s} \lambda_n}{\phi_{\GRE}} \notag \\
	\Rightarrow ~~~ &
	\|\betahat - \betastar\|_2^2
	\leq  \frac{\left( \frac{8}{3} + 5 \alpha D \| \betastar_S \|_1 + \frac{3}{4} (\alpha D \| \betastar_S\|_1)^2 \right)^2 s \lambda_n^2}{\phi_{\GRE}^2}
	\leq  \frac{16 s \lambda_n^2}{\phi_{\GRE}^2} \label{eq:BetaBound}
	\end{align}
	This concludes the assertion.
\end{proof}

\section{Corollary of Theorem \ref{th:OracleIneaulityBetahat}}

For comparison with IILasso and Lasso, we use the following a little bit stricter bound.

\begin{Corollary}
	Suppose the same assumption of Theorem \ref{th:OracleIneaulityBetahat}
	except for $\alpha \leq \frac{1}{4 D \| \betastar_S \|_1}$ and Assumption GRE$(S,3,\frac{3}{2})$.
	Instead, suppose that Assumption GRE$(S, C, \frac{3}{2})$ (Definition \ref{eq:AssGRE}) where $C = 2+\frac{15}{4}\alpha D\|\betastar_S\|_1+\frac{9}{16}(\alpha D\|\betastar_S\|_1)^2$ is satisfied.
	Then, it holds that
	$$
	\|\betahat - \betastar\|_2^2
	\leq  \frac{\left( \frac{8}{3} + 5 \alpha D \| \betastar_S \|_1 + \frac{3}{4} (\alpha D \| \betastar_S\|_1)^2 \right)^2 s \lambda_n^2}{\phi_{\GRE}^2},
	$$
	with probability $1-\delta$.
\end{Corollary}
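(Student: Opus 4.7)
The plan is to extract the sharper intermediate estimate that already appears inside the proof of Theorem~\ref{th:OracleIneaulityBetahat}, stopping just before the coarsening step in which $\alpha D\|\betastar_S\|_1 \leq 1/4$ is used to collapse a variable prefactor to the constant $16$. Put another way, the Corollary is nothing but equation (B.7) of the Theorem's proof, read before that final upper bound.

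Concretely, I would first re-run the opening part of the Theorem~\ref{th:OracleIneaulityBetahat} argument verbatim. On the high-probability event $\{\|X^\top \epsilon/n\|_\infty \leq \gamma_n\}$ supplied by Lemma~\ref{lemm:gamman} (with probability at least $1-\delta$), combining the optimality hypothesis $\Llambda(\betahat) \leq \Llambda(\betastar)$ with $y = X\betastar + \epsilon$ and $3\gamma_n \leq \lambda_n$ produces both the energy-type inequality $\tfrac{1}{2n}\|X(\betahat-\betastar)\|_2^2 \leq \lambda_n\bigl(\tfrac{4}{3} + \tfrac{5}{2}\alpha D\|\betastar_S\|_1 + \tfrac{3}{8}(\alpha D\|\betastar_S\|_1)^2\bigr)\|\betahat_S-\betastar_S\|_1$ and the cone-type inequality (B.4), which reads $\|\betahat_{S^c}\|_1 + \tfrac{3}{4}\alpha|\betahat_{S^c}|^\top R_{S^c S^c}|\betahat_{S^c}| + \tfrac{3}{2}\alpha|\betahat_{S^c}|^\top R_{S^c S}|\betahat_S| \leq C\,\|\betahat_S-\betastar_S\|_1$ with $C = 2 + \tfrac{15}{4}\alpha D\|\betastar_S\|_1 + \tfrac{9}{16}(\alpha D\|\betastar_S\|_1)^2$. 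The key bookkeeping step is the quadratic bound $|\betastar_S|^\top R_{SS}|\betastar_S| - |\betahat_S|^\top R_{SS}|\betahat_S| \leq 2\|R_{SS}|\betastar_S|\|_\infty \|\betahat_S-\betastar_S\|_1 + D\|\betahat_S-\betastar_S\|_1^2$ together with $\|R_{SS}|\betastar_S|\|_\infty \leq D\|\betastar_S\|_1$, exactly as used in the original proof.

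Next I would read this cone inequality as a membership statement. Writing $\Delta\beta := \betahat - \betastar$ and observing that $\betahat_S = (\Delta\beta)_S + \betastar_S$, the inequality above is, by definition, precisely $\Delta\beta \in \calB(S, C, \tfrac{3}{2})$ for the $C$ just displayed. The hypothesis $\GRE(S, C, \tfrac{3}{2})$ then supplies the quadratic lower bound $\tfrac{1}{n}\|X\Delta\beta\|_2^2 \geq \phi_{\GRE}\|\Delta\beta\|_2^2$; inserting this into the energy inequality, using $\|\betahat_S-\betastar_S\|_1 \leq \sqrt{s}\|\Delta\beta\|_2$, and dividing through by $\|\Delta\beta\|_2$ delivers exactly the stated bound.

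There is no genuine obstacle beyond the discipline of \emph{not} applying the two coarsenings used at the end of Theorem~\ref{th:OracleIneaulityBetahat}, namely $C \leq 3$ and $\bigl(\tfrac{8}{3} + 5\alpha D\|\betastar_S\|_1 + \tfrac{3}{4}(\alpha D\|\betastar_S\|_1)^2\bigr)^2 \leq 16$, both of which relied on $\alpha D\|\betastar_S\|_1 \leq 1/4$. Replacing $\GRE(S, 3, \tfrac{3}{2})$ by the sharper $\GRE(S, C, \tfrac{3}{2})$ for the variable $C$ above lets the same chain of inequalities carry through without any smallness assumption on $\alpha$, yielding the claimed bound.
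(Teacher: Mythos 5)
Your proposal is correct and is essentially the paper's own proof: the authors likewise observe that the intermediate cone inequality \eqref{eq:BetaStrictConstraint} directly gives $\Delta\beta \in \calB(S, C, \tfrac{3}{2})$ with $C = 2+\tfrac{15}{4}\alpha D\|\betastar_S\|_1+\tfrac{9}{16}(\alpha D\|\betastar_S\|_1)^2$, and then read off the sharper bound from the chain in \eqref{eq:BetaBound} before the final coarsening to $16$. No substantive difference.
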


\begin{proof}
	This is derived basically in the same way as Theorem \ref{th:OracleIneaulityBetahat}.
	From \eqref{eq:BetaStrictConstraint}, we can see directly that
	$$
	\Delta \beta \in \calB(S, C, C'),
	$$
	where $\Delta \beta = \betahat - \betastar$, $C = 2+\frac{15}{4}\alpha D\|\betastar_S\|_1+\frac{9}{16}(\alpha D\|\betastar_S\|_1)^2$ and $C'=\frac{3}{2}$.
	This and \eqref{eq:BetaBound} concludes the assertion.
\end{proof}

From this corollary, we can compare Lasso and IILasso with $R_{SS}=O$.
\begin{itemize}
	\item If $\alpha=0$, we have
	$$
	\|\betahat - \betastar\|_2^2
	\leq  \frac{64 s \lambda_n^2}{9 \phi_{\GRE}^2},
	$$
	with $\calB(S, C, C')$ where $C=2$ and $C'=0$.
	This is a standard Lasso result.
	\item If $D=0$, we have
	$$
	\|\betahat - \betastar\|_2^2
	\leq  \frac{64 s \lambda_n^2}{9 \phi_{\GRE}^2},
	$$
	with $\calB(S, C, C')$ where $C=2$ and $C'=\frac{3}{2}$.
	Since $\phi_{\rm GRE}$ is the minimum eigenvalue restricted by $\calB(S, C, C')$, $\phi_{\rm GRE}$ of IILasso is larger than that of Lasso.
\end{itemize}

%

\section{Proof of Theorem \ref{th:LocalOptimality}}

\begin{proof}
	Let
	$$
	\betacheck := \mathop{\arg \min}_{\beta \in \Real^p : \beta_{S^c}=0}
	\|y - X\beta\|_2^2.
	$$
	That is, $\betacheck$ is the least squares estimator with the true non-zero coefficients.
	Let $\betatilde$ be a local optimal solution.
	For $0 < h < 1$, letting $\beta(h) := \betatilde + h(\betacheck - \betatilde)$, then it holds that
	\begin{align}
	\Llambda(\beta(h)) - \Llambda(\betatilde)
	= & \frac{h^2-2 h}{2n} \|X(\betatilde - \betacheck)\|_2^2  - \frac{h}{n} (X\betacheck - y)^\top X(\betatilde-\betacheck) \notag \\
	& + \lambda_n (\|\beta(h)\|_1 - \|\betatilde\|_1) +
	\frac{\lambda_n \alpha}{2} ( |\beta(h)|^\top R |\beta(h)| -  |\betatilde|^\top R |\betatilde| ).
	\label{eq:LhatbetahmLhatbeta}
	\end{align}

	First we evaluate the term $ \frac{1}{n} (X\betacheck - y)^\top X(\betatilde-\betacheck)=
	\frac{1}{n} (X\betacheck - y)^\top X_S(\betatilde_S-\betacheck_S)
	+
	\frac{1}{n} (X\betacheck - y)^\top X_{S^c}(\betatilde_{S^c}-\betacheck_{S^c})$ as follows: \\
	\noindent (1)
	Since $\betacheck$ is the least squares estimator and
	$\frac{1}{n} X_S^\top X_S$ is invertible by the assumption,
	we have
	$$
	\betacheck_S = (X_S^\top X_S)^{-1} X_S^\top y,~~~\betacheck_{S^c} = 0.
	$$
	Therefore,
	\begin{align*}
	\frac{1}{n}X_S^\top (X \betacheck - y)
	& = \frac{1}{n} X_S^\top (X_S (X_S^\top X_S)^{-1} X_S^\top - I) y.
	\end{align*}
	Here, $I - X_S (X_S^\top X_S)^\top X_S^\top$ is the projection matrix to the
	orthogonal complement of the image of $(X_S^\top X_S)^\top$.
	Hence, $\frac{1}{n} (X\betacheck - y)^\top X_S(\betatilde_S-\betacheck_S)=0$.
	\\
	\noindent (2)
	Noticing that
	\begin{align*}
	\frac{1}{n}X_{S^c}^\top (X \betacheck - y)  &
	=
	- \frac{1}{n}X_{S^c}^\top
	(I - X_S (X_S^\top X_S)^{-1} X_S^\top)  y  \\
	&
	=
	- \frac{1}{n}X_{S^c}^\top
	(I - X_S (X_S^\top X_S)^{-1} X_S^\top)  (X_S \betastar_S + \epsilon) \\
	& =
	- \frac{1}{n}X_{S^c}^\top
	(I - X_S (X_S^\top X_S)^{-1} X_S^\top) \epsilon ,
	\end{align*}
	where we used $ (I - X_S (X_S^\top X_S)^{-1} X_S^\top)X_{S^c}=0$ in the last line.
	Because $(I - X_S (X_S^\top X_S)^\top X_S^\top)$ is a projection matrix, we have
	$\|(I - X_S (X_S^\top X_S)^{-1} X_S^\top) X_j\|_2^2 \leq \|X_j\|_2^2$.
	This and Lemma \ref{lemm:gamman} gives
	$$
	\left\| \frac{1}{n}X_{S^c}^\top (X \betacheck - y)\right\|_\infty \leq \gamma_n,
	$$
	with probability $1- \delta$. Hence, let $V:=\supp(\betatilde) \backslash S$, then we have
	$$
	\left| \frac{1}{n}(\betatilde_{S^c} - \betacheck_{S^c})^\top
	X_{S^c}^\top (X \betacheck - y) \right|
	\leq \gamma_n \|\betatilde_{S^c} - \betacheck_{S^c}\|_1 =
	\gamma_n \|\betatilde_{\Sc}\|_1.
	$$
	where we used the assumption 
	$\Sc \subseteq S^c$
	and $\betacheck_{\Sc}=0$.

	Combining these inequalities and the assumption $\lambda_n \geq \gamma_n$, we have that
	\begin{align}
	\left| \frac{1}{n} (X\betacheck - y)^\top X(\betatilde-\betacheck) \right|
	\leq
	& \lambda_n \| \betatilde_{\Sc}  \|_1.
	\label{eq:XbetaresBound}
	\end{align}
	As for the regularization term, we evaluate each term of
$\lambda_n (\|\beta(h)\|_1 - \|\betatilde\|_1) +
	\frac{\lambda_n}{2} ( |\beta(h)|^\top R |\beta(h)| -  |\betatilde|^\top R |\betatilde| )$ in the following.

(i) Evaluation of $\|\beta(h)\|_1 - \|\betatilde\|_1$.
Because of the definition of $\beta(h)$, it holds that
\begin{align}
\|\beta(h)\|_1 - \|\betatilde\|_1
& = \|\betatilde + h(\betacheck - \betatilde)\|_1 - \|\betatilde\|_1 \notag \\
& = \|\betatilde_S + h(\betacheck_S - \betatilde_S)\|_1 - \|\betatilde_S\|_1
+\|\betatilde_V + h(\betacheck_V - \betatilde_V)\|_1 - \|\betatilde_V\|_1 \notag \\
& = \|\betatilde_S + h(\betacheck_S - \betatilde_S)\|_1 - \|\betatilde_S\|_1
+(1-h) \| \betatilde_V \|_1 - \|\betatilde_V\|_1 \notag \\
&\leq
h \|\betacheck_S - \betatilde_S\|_1 -h \| \betatilde_V \|_1.
\label{eq:betahLonediff}
\end{align}

(ii) Evaluation of $ |\beta(h)|^\top R |\beta(h)| -  |\betatilde|^\top R |\betatilde|$.
Note that
\begin{align}
&|\beta(h)_j| R_{jk}|\beta(h)_k| - |\betatilde_j| R_{jk} |\betatilde_k| \notag \\
& =
|(1-h)\betatilde_j + h \betacheck_j| R_{jk}|(1-h)\betatilde_k + h \betacheck_k| - |\betatilde_j| R_{jk} |\betatilde_k|  \notag\\
& \leq
(1-h)^2|\betatilde_j |R_{jk}|\betatilde_k|+ h (1-h)(|\betacheck_j| R_{jk}|\betatilde_k| + |\betatilde_j| R_{jk}|\betacheck_k|)  \notag\\
& ~~~~+ h^2 |\betacheck_j|R_{jk}|\betacheck_k| - |\betatilde_j| R_{jk} |\betatilde_k|  \notag\\
& =
-2h|\betatilde_j |R_{jk}|\betatilde_k|  + h (|\betacheck_j| R_{jk}|\betatilde_k| + |\betatilde_j| R_{jk}|\betacheck_k|)
+O(h^2) \notag \\
& =
 h [(|\betacheck_j| -|\betatilde_j |) R_{jk}|\betatilde_k| + |\betatilde_j| R_{jk} (|\betacheck_k|  -|\betatilde_k|)]
+O(h^2).
\label{eq:RbetahRbetaRoughEval}
\end{align}
If $j,k \in S$, then the right hand side of Eq. \eqref{eq:RbetahRbetaRoughEval} is bounded by
\begin{align*}
& 
 h (|\betacheck_j  - \betatilde_j | R_{jk}|\betacheck_k  - \betatilde_k| + |\betacheck_j  - \betatilde_j| R_{jk} |\betacheck_k  -\betatilde_k| )  \\
&~~~+
 h (|\betacheck_j  - \betatilde_j | R_{jk}|\betacheck_k | + |\betacheck_j  | R_{jk} |\betacheck_k  -\betatilde_k| )
+O(h^2).
\end{align*}
If $j \in V$ and $k \in S$, then the right hand side of Eq. \eqref{eq:RbetahRbetaRoughEval} is bounded by
$$
 h  |\betatilde_j| R_{jk} (|\betacheck_k|  -|\betatilde_k|) +O(h^2) \leq
 h  |\betatilde_j| R_{jk} |\betacheck_k  -\betatilde_k| +O(h^2).
$$
If $j \in V$ and $k \in V$, then the right hand side of Eq. \eqref{eq:RbetahRbetaRoughEval} is bounded by
$$
0+O(h^2) = O(h^2).
$$
Based on these evaluations, we have
\begin{align*}
&  |\beta(h)|^\top R |\beta(h)| -  |\betatilde|^\top R |\betatilde| \\
\leq
&
2 h \left( |\betacheck_S - \betatilde_S|^\top R_{SS}|\betacheck_S - \betatilde_S|
+ |\betacheck_S - \betatilde_S|^\top R_{SS}|\betacheck_S|
+ |\betatilde_V|^\top R_{VS} |\betacheck_S - \betatilde_S|
\right)  + O(h^2) \\
\leq
&
2 h \left(  |\betacheck - \betatilde|^\top R|\betacheck - \betatilde|
+  |\betacheck_S - \betatilde_S|^\top R_{SS}|\betacheck_S|
\right)  + O(h^2) \\
\leq
&
2h \Ddash (  \|\betacheck - \betatilde\|_2^2 +  \|\betacheck\|_2 \|\betacheck_S - \betatilde_S\|_2) + O(h^2).
\end{align*}
Here, we will show later in Eq. \eqref{eq:betacheckbetastarerror} that
$\|\betacheck - \betastar\|_2 \leq \sqrt{s} \lambda_n/\phi$,
and thus it follows that
$$\|\betacheck\|_2  \leq \|\betastar\|_2 +  \sqrt{s} \lambda_n/\phi.$$
Therefore, we obtain that
\begin{align}
&  |\beta(h)|^\top R |\beta(h)| -  |\betatilde|^\top R |\betatilde| \notag\\
& \leq
2h \Ddash \left( \|\betacheck - \betatilde\|_2^2 +  (\|\betastar\|_2 +  \sqrt{s} \lambda_n/\phi) \|\betacheck_S - \betatilde_S\|_2 \right) + O(h^2).
\label{eq:Rbetahbetadiff}
\end{align}

%

	Applying the inequalities \eqref{eq:XbetaresBound}, \eqref{eq:betahLonediff} and \eqref{eq:Rbetahbetadiff} to \eqref{eq:LhatbetahmLhatbeta} yields that
	\begin{align*}
	& \Llambda(\beta(h)) - \Llambda(\betatilde)\\
	\leq
	& h \Big\{
	-\frac{1}{n} \|X(\betacheck - \betatilde)\|_2^2  + \lambda_n \|\betatilde_S - \betacheck_S\|_1
	- (\lambda_n - \gamma_n) \|\betatilde_{\Sc}\|_1
	\\
&
+\lambda_n \alpha \Ddash [ \|\betacheck - \betatilde\|_2^2 +  (\|\betastar\|_2 +  \sqrt{s} \lambda_n/\phi) \|\betacheck_S - \betatilde_S\|_2]
	\Big\}+
	O(h^2) \\
	\leq
	& h \Big\{
	-\phi \|\betacheck - \betatilde\|_2^2  + \lambda_n \|\betatilde_S - \betacheck_S\|_1
	\\
&
+\lambda_n \alpha \Ddash [ \|\betacheck - \betatilde\|_2^2 +  (\|\betastar\|_2 +  \sqrt{s} \lambda_n/\phi) \|\betacheck_S - \betatilde_S\|_2]
	\Big\}+
	O(h^2) \\
	\leq & h \Bigg\{ \left(-\phi + \lambda_n \alpha \Ddash \right) \|\betacheck - \betatilde \|_2^2   \\
&	+ \lambda_n \left( \|\betatilde_S - \betacheck_S\|_1
+ \alpha \Ddash (\|\betastar\|_2 +  \sqrt{s} \lambda_n / \phi) \|\betacheck_S - \betatilde_S\|_2 \right)
	\Bigg\}
	+ O(h^2),
	\end{align*}
	where we used the assumption $\lambda_n > \gamma_n$ in the second inequality.

	Since we have assumed $\alpha < \min \left\{ \frac{\sqrt{s}}{2\Ddash\| \betastar \|_2}, \frac{\phi}{2\Ddash\lambda_n}\right\}$,
	the right hand side is further bounded by
	\begin{align*}
	&
	h
	\left\{
	- \frac{\phi}{2} \|\betacheck - \betatilde\|_2^2
	+ 2 \lambda_n \sqrt{s} \|\betacheck_S - \betatilde_S\|_2
	\right\}
	+ O(h^2).
	\end{align*}
	Because of this,
	if $\|\betacheck - \betatilde\|_2 > \frac{4 \sqrt{s} \lambda_n}{\phi}$, then the first term
	becomes negative, and we conclude that, for sufficiently small $\eta > 0$, it holds that
	$$
	\Llambda(\beta(h)) < \Llambda(\betatilde),
	$$
	for all $0 < h < \eta $. In other word, $\betatilde$ is not a local optimal solution. Therefore,
	we must have
	$$
	\|\betacheck - \betatilde\|_2 \leq \frac{4 \sqrt{s} \lambda_n}{\phi}
	$$
	Finally, notice that $\|\betatilde - \betastar\|_2^2 \leq
	(\|\betatilde - \betacheck\|_2 + \|\betastar - \betacheck\|_2)^2$ and
	\begin{align}
	\|\betacheck - \betastar\|_2^2
	& = \|(X_S^\top X_S)^{-1} X_S^\top y - \betastar_S\|_2^2
	=\|(X_S^\top X_S)^{-1} X_S^\top (X_S \betastar_S + \epsilon) - \betastar_S\|_2^2 \notag \\
	&
	=\| (X_S^\top X_S)^{-1} X_S^\top  \epsilon\|_2^2
	\leq \phi^{-2} \|\frac{1}{n}X_S^\top  \epsilon\|_2^2
	\leq \phi^{-2} s \gamma_n^2
	\leq \phi^{-2} s \lambda_n^2,
	\label{eq:betacheckbetastarerror}
	\end{align}
	which concludes the assertion.
\end{proof}

\section{Optimization for Logistic Regression}
We derive coordinate descent algorithm of IILasso for the binary objective variable.
The objective function is
\begin{align*}
L(\beta) =
-\frac{1}{n} \sum_i \left(y_i X^i \beta - \log (1 + \exp(X^i \beta))\right) + \lambda \left( \|\beta\|_1 + \frac{\alpha}{2} |\beta|^\top R |\beta| \right),
\end{align*}
where $X^i$ is the i-th row of $X=[1, X_1, \cdots, X_p]$ and $\beta=[\beta_0, \beta_1, \cdots, \beta_p]$.
Forming a quadratic approximation with the current estimate $\betabar$, we have
\begin{align*}
\bar{L}(\beta) =
-\frac{1}{2n} \sum_{i=1}^n w_i(z_i - X^i\beta)^2 + C(\betabar) + \lambda \left( \|\beta\|_1 + \frac{\alpha}{2} |\beta|^\top R |\beta| \right), \label{eq:lr}
\end{align*}
where
\begin{align*}
z_i &= X^i \betabar + \frac{y_i - \bar{p}(X^i)}{\bar{p}(X^i)(1-\bar{p}(X^i))}, \\
w_i &= \bar{p}(X^i)(1-\bar{p}(X^i)), \\
\bar{p}(X^i) &= \frac{1}{1+\exp(-X^i\betabar)}.
\end{align*}
To derive the update equation, when $\beta_j \neq 0$, differentiating the quadratic objective function with respect to $\beta_j$ yields
\begin{align*}
\partial_{\beta_j} \bar{L}(\beta) =
& -\frac{1}{n} \sum_{i=1}^n w_i(z_i - X^i\beta)X_{ij} + \lambda \left( \sgn(\beta_j) + \alpha R_j^\top |\beta| \sgn(\beta_j) \right) \\
=& -\frac{1}{n} \sum_{i=1}^n w_i \left(z_i - X_{i, -j}\beta_{-j}\right) X_{ij}
+ \left( \frac{1}{n} \sum_{i=1}^n w_i X_{ij}^2
+ \lambda R_{jj} \right) \beta_j + \lambda \left( 1 + \alpha R_{j, -j} |\beta_{-j}| \right) \sgn(\beta_j).
\end{align*}
This yields
\begin{align*}
\beta_j \leftarrow \frac{1}{\frac{1}{n} \sum_{i=1}^n w_i X_{ij}^2
	+ \lambda \alpha R_{jj}}
S\left(\frac{1}{n} \sum_{i=1}^n w_i \left(z_i - X_{i, -j}\beta_{-j}\right) X_{ij}, \
\lambda \left(1 + \alpha R_{j, -j} |\beta_{-j}| \right) \right).
\end{align*}
These procedures amount to a sequence of nested loops.
The whole algorithm is described in Algorithm \ref{alg:logistic}.

\begin{algorithm}[t]
	\caption{CDA for Logistic IILasso}
	\label{alg:logistic}
	\begin{algorithmic}
		\FOR{$\lambda = \lambda_{\max}, \cdots, \lambda_{\min}$}
		\STATE initialize $\beta$
		\WHILE{until convergence}
		\STATE update the quadratic approximation using the current parameters $\betabar$
		\WHILE{until convergence}
		\FOR{$j = 1, \cdots, p$}
		\STATE $\beta_j \leftarrow \frac{1}{\frac{1}{n} \sum_{i=1}^n w_i X_{ij}^2
			+ \lambda \alpha R_{jj}}
		S\left(\frac{1}{n} \sum_{i=1}^n w_i \left(z_i - X_{i, -j}\beta_{-j}\right) X_{ij}, \
		\lambda \left(1 + \alpha R_{j, -j} |\beta_{-j}| \right) \right)$
		\ENDFOR
		\ENDWHILE
		\ENDWHILE
		\ENDFOR
	\end{algorithmic}
\end{algorithm}

\end{document}